
\documentclass[]{interact}

\usepackage{epstopdf}
\usepackage[caption=false]{subfig}
\usepackage{hyperref}



\theoremstyle{plain}
\newtheorem{theorem}{Theorem}[section]

\theoremstyle{definition}

\theoremstyle{remark}

\usepackage{blindtext}
\usepackage{algpseudocode}
\usepackage{natbib}
\usepackage{tikz}
\usepackage{subcaption}
\captionsetup{compatibility=false}
\usepackage{graphicx}
\usepackage{amsmath}
\usepackage{footnote}
\usepackage[linesnumbered,ruled,vlined]{algorithm2e}
\usepackage{algorithmicx}
\usepackage{algcompatible}
\usepackage{physics}

%

%
%
%


\DeclareMathOperator*{\argmin}{arg\,min}


\usepackage{lastpage}



\begin{document}

\setcitestyle{square,numbers}

\title{{An efficient, accurate, {and interpretable} machine learning method for computing probability of failure}}

\author{
	\name{J.~Zhu\textsuperscript{a} and D.~Estep\textsuperscript{b}\thanks{CONTACT D.~Estep. Email: destep@sfu.ca}}
	\affil{\textsuperscript{a} Department of Statistics and Actuarial Science, Simon Fraser University, Burnaby, British Columbia, Canada; \textsuperscript{b}Department of Statistics and Actuarial Science, Simon Fraser University, Burnaby, British Columbia, Canada}
}

\maketitle

\begin{abstract}
{We introduce a novel machine learning method called the \emph{Penalized Profile Support Vector Machine based on the Gabriel edited set} for the computation of the probability of failure for a complex system as determined by a threshold condition on a computer model of system behavior. The method  is designed to minimize the number of evaluations of the computer model while preserving the geometry  of the decision boundary that determines the probability. It employs an adaptive sampling strategy designed to strategically allocate points near the boundary determining failure and builds a locally linear surrogate boundary that remains consistent with its geometry by strategic clustering of training points. We prove two convergence results and  we compare the performance of the method against a number of state of the art classification methods on four test problems.  We also apply the method to determine the probability of survival using the Lotka--Volterra model for competing species.}
\end{abstract}

\begin{keywords}
classification; clustering; Gabriel edited set; inverse; {interpretability}, machine learning;  nonlinear decision boundary; probability of failure; support vector machine
\end{keywords}

\section{Introduction}

Computing the \textit{probability of failure}, or the probability that a system surpasses a critical threshold, is a fundamental problem  in engineering and  science, especially in applications in which safety and reliability is important (\cite{Freud1956,Santo1977,Miller1992,Au2001,Gifford2005,Silva2008,Li2010, Morio2011,POF-darts,Wang2022}). We consider a  physical or engineering system whose behavior varies according to a set of physical characteristics, e.g.,   mechanical, thermal, optical, electrical, or magnetic properties, that take values in a specified set $\Lambda\subset \mathbb{R}^d$, $d\geq 1$. We assume that the state of the system $z$ is described as the solution of a \textit{process model} $M(z,x)=0$, e.g., a differential equation encoding physical laws and conservation principles. The solution $z=z(x)$ is an implicit function of the characteristics $x \in \Lambda$. We assume that an important measurement of system response is determined by a functional $q$ applied to the solution $z$, yielding the composition $Q(x) = q(z(x))$ for $x \in \Lambda$. We define the range of $\mathcal{D} = Q(\Lambda)\subset\mathbb{R}$. {In practice, the process model is implemented as a computer model and we use $Q$ to refer to both the process model and its implementation.} 

We classify points in $\Lambda$ according to a partition into \textit{failure}  and \textit{success} events,
\begin{equation}\label{Qcritcond}
\Lambda = \Lambda_{\mathcal{F}} \cup \Lambda_{\mathcal{S}} = \{x \in \Lambda : \, Q(x)\geq q_0\}\, \bigcup \, \{x \in \Lambda : \, Q(x)< q_0\},
\end{equation}
for a fixed \textit{critical value} $q_0\in\mathcal{D}$. Assuming $\Lambda$ is bounded and equipping it with the Borel $\sigma$--algebra  $\mathcal{B}_\Lambda$ and the uniform probability measure $P$, the problem is to estimate the \textit{probability of failure}, 
{\begin{equation}\label{ProfFail}
P(\Lambda_{\mathcal{F}})=P\big(\{x \in \Lambda: Q(x) \geq q_0\}\big),
\end{equation}}
or, noting  $P(\Lambda_{\mathcal{S}})=1-P(\Lambda_{\mathcal{F}})$, the \textit{probability of success} $P(\Lambda_{\mathcal{S}})$.

The critical value $q_0$ determines a \textit{nonlinear decision boundary},
\[
Q^{-1}(q_0) = \{ x: Q(x) = q_0 \}\subset \Lambda.
\] 
Under generic conditions on $M$ and $q$, $Q^{-1}(q)$ is a compact piecewise smooth manifold in $\Lambda$ for each $q \in \mathcal{D}$ and the manifolds corresponding to different output values do not intersect (\cite{BET+14,LYang,Shi25}).  Given a critical value $q_0\in\mathcal{D}$, the points in $\Lambda$ are classified according to which ``side'' of $Q^{-1}(q_0)$ they lie, see Fig.~\ref{fig:Brusscontour}.  {Therefore, computation of probability of failure can be framed as a classification problem  in which a model of the physics determines the {nonlinear decision boundary}.}

To illustrate, we consider the Brusselator model of an autocatalytic chemical reaction in which one of the reaction products is a catalyst for the reaction (\cite{Belousov,Brusselator}).  Such reactions can exhibit complex dynamics such as self-organizing spatial patterns and oscillatory behavior.  The model considers two chemical species reacting in a homogeneous medium in which the species diffuse freely. The model is a set of differential equations whose solution is determined by two rate parameters and the initial conditions. In dimensionless form, the model is
\begin{equation}\label{Brussmodel}
	M(z,x) = \begin{cases} 
		\frac{d z_1}{d t} - x_1+(1+x_2) z_1 -z_1^2z_2,\\
		\frac{d z_2}{d t} - x_2z_1 + z_1^2z_2,\\
		z_1(0) - x_3\\
		z_2(0) - 1,
	\end{cases} 
\end{equation}
where $z_1$ and $z_2$ are functions of time $t>0$ and $x\in \Lambda$.

We define the physical characteristics to be the two rate parameters and the initial value for $z_1$ valued in the parameter domain, 
\[
x  = (x_1\; x_2\;  x_3)^\top \in \Lambda = [0.7,1.5]\times [2.75,3.25]\times[1,2] \subset \mathbb{R}^3,
\]
chosen following (\cite{estep08}). System response is quantified using the average of the sum of the reactants over a fixed time interval $[0,T]$,
\[
Q(x) = \frac{1}{T}\int_{0}^{T}(z_1(t,x) + z_2(t,x))dt,
\]

We illustrate by fixing $x_3=1.65$ and plotting the nonlinear decision boundary for the Brusselator model corresponding to $q_0=3.75$ and $T=5$ in Fig.~\ref{fig:Brusscontour}. The estimated probability of failure is $0.30026$ computed using the proposed algorithm (using $5000$ points and $20$ repetitions for later reference).
\begin{figure}[htb]
	\centering
	\includegraphics[width=0.45\textwidth]{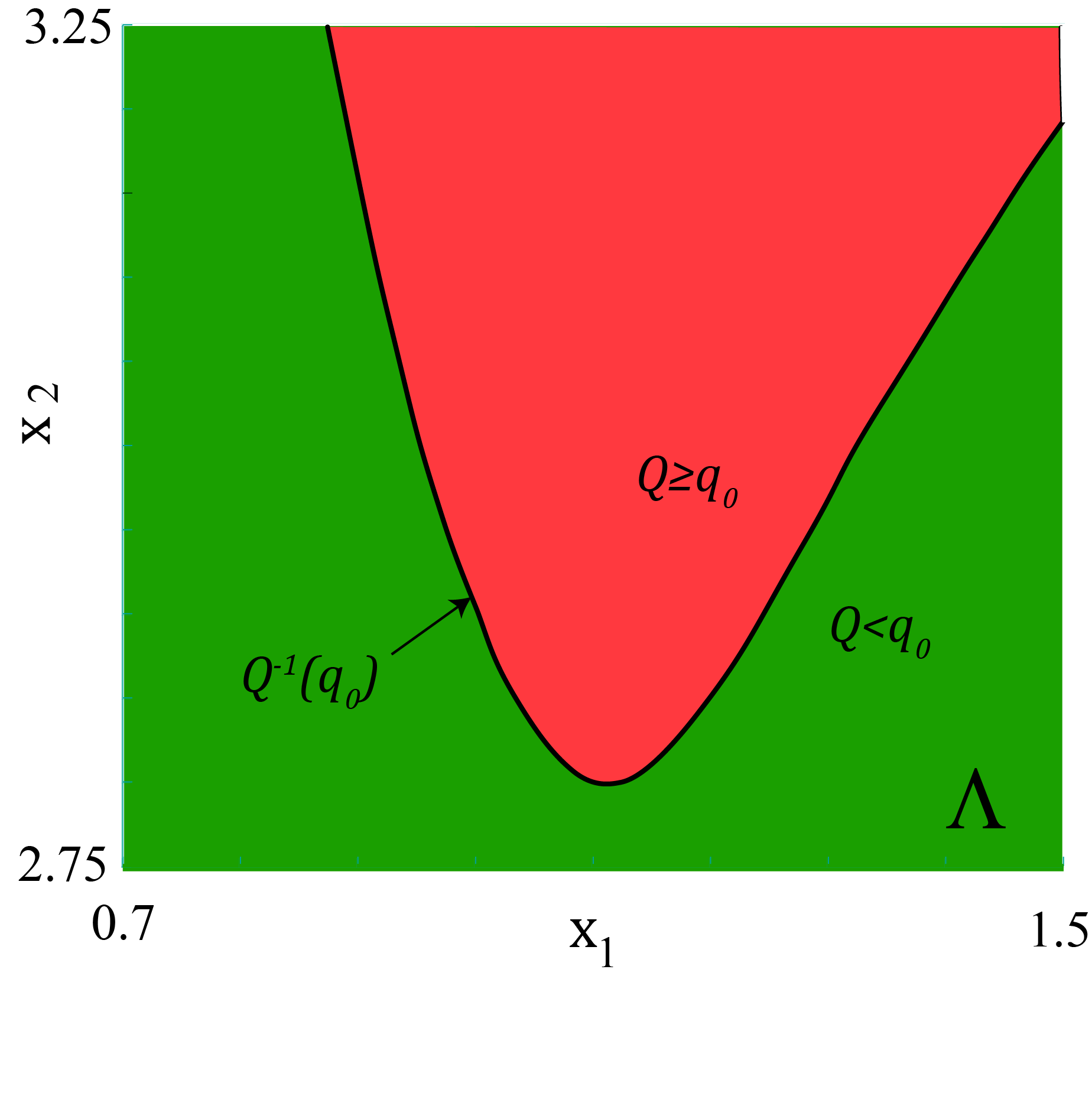}
	\caption{An example of a nonlinear decision boundary for $Q\geq 3.75$ the Brusselator model with fixed  $x_3=1.65$.}
	\label{fig:Brusscontour}
\end{figure}

\subsection*{{Estimating the probability of failure by classification}}

{The direct Monte Carlo approach to compute the probability of failure involves drawing $N$ uniformly distributed random points in $\Lambda$, classifying the points after evaluating $Q$, and using the estimate,
\begin{equation}\label{estProbFail}
P(\Lambda_{\mathcal{F}})\approx \frac{N_{\mathcal{F}}}{N},
\end{equation} 
where $N_{\mathcal{F}}$ is the number of the points that lie in $\Lambda_{\mathcal{F}}$. The Law of Large Numbers and Central Limit Theorem guarantee the estimate converges to the true value in the limit of a large number of samples (\cite{PT_book}). }

{The nonlinear decision boundary  has codimension smaller than the dimension of $\Lambda$, so many points in a sample are expected to lie far from the boundary and   provide limited information regarding the boundary geometry. One consequence is that a large number of samples may be required to achieve accuracy using \eqref{estProbFail}. However, the model $Q$ is very expensive to evaluate in many applications and therefore it is computationally infeasible to evaluate $Q$ at many points. These issues worsen as the dimension of the parameter space increases. It is natural to consider machine learning approaches that classify points in the domain after building a model for classification that requires evaluating $Q$ at relatively few training points.} 

Machine learning methods for classification have evolved from simple linear models to complex ensemble and deep learning techniques. Early approaches like logistic regression (\cite{logistic_regression}) focus on computing linear decision boundaries that offer simplicity and interpretability but struggle with accuracy in the case of nonlinear decision boundaries. Nonlinear methods such as support vector machines with the kernel trick address these limitations by offering the possibility of approximating complex boundaries at the cost of greatly increased complexity of computation. Tree--based methods, including decision trees, and random forests (\cite{random_forest}), have become popular for their ability to handle structured data and nonlinear relationships while providing high interpretability. Probabilistic models, such as Naive Bayes and Bayesian networks (\cite{bayesNet}), predict class probabilities but often require strong assumptions about feature independence. Ensemble methods like bagging, boosting, and stacking, e.g., XGBoost (\cite{xgBoost}), combine multiple classifiers to enhance robustness and accuracy, while deep learning models, including multilayer perceptron (\cite{neuralNet}), often excel in high dimensional, unstructured data such as images and text.  

{The vast majority of the research on machine learning methodology for classification has been driven by data-focused problems in which there is no underlying process model that determines an explicit nonlinear decision boundary. Determining optimal accuracy for a given data set relies heavily on cross-validation  to tune hyperparameters in a given approach, and little attention is paid to fidelity to an underlying true decision boundary. This is reasonable in problems for which it is not possible to access the boundary.  }

{However, the application to computation of probability of failure is fundamentally different in this regard. Theoretically, the process model can be used to determine the actual decision boundary, and that boundary typically has important physical meaning. Thus, interpretability of a surrogate boundary determined by a machine learning methodology becomes an attractive feature.}

{We illustrate in Figure~\ref{fig:interpret}, where we show the surrogate classification boundary along with the classification results computed using the \textit{Profile SVM (PSVM)} method  with MagKmeans clustering approach described in Appendix~\ref{App:PSVM} on one of the test problems described in Appendix~\ref{App:PSVM_comp}. Detailed comparisons of performance are discussed in Section~\ref{sec:PSVMG}.  }

{We color the true decision boundary using a dark red color and shade the region of failure. PSVM produces a collection of locally linear surrogate decisions boundaries and classification of a point is determined by a weighted average of classifications using the surrogate boundaries. The optimal classification is obtained by cross-validation. In this example, PSVM determines six linear surrogate boundaries. We indicate the classification of points using stars and circles. We see that PSVM achieves reasonable accuracy in classification after the use of cross-validation, but the associated surrogate decision boundary has poor interpretability. The point here is to show that achieving a high level of accuracy for a given data set is no guarantee of producing a surrogate decision boundary that is interpretable.}

\begin{figure}[htb]
	\centering
	\includegraphics[width=.4\textwidth]{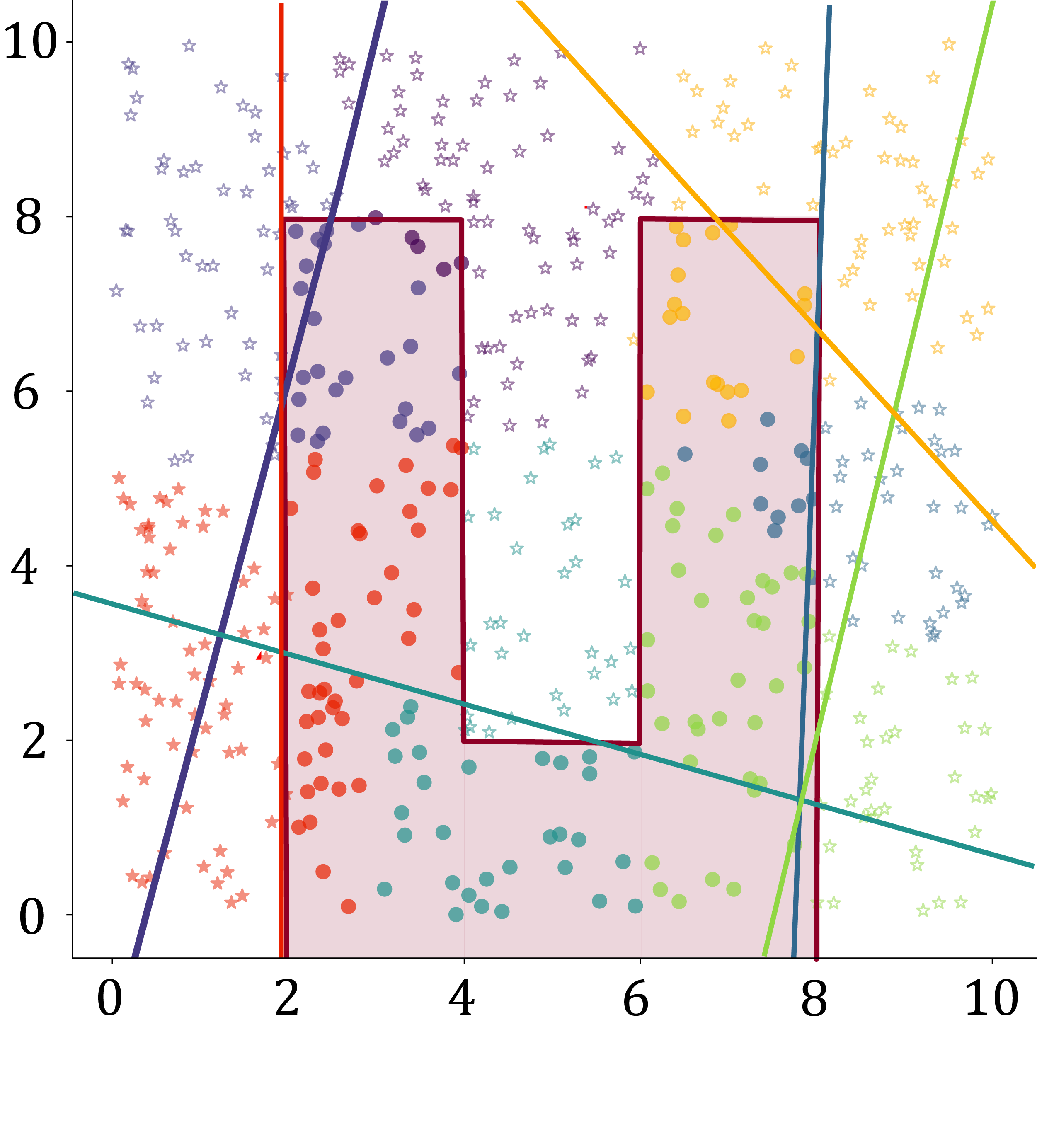}
	\caption{{The surrogate classification boundary along with the classification results computed using the PSVM  with MagKmeans clustering approach described in Section~\ref{App:PSVM} on one of the test problems described in Section~\ref{sec:PSVMG}. The classification of points is indicated using stars and circles. The true decision boundary is colored dark red and the region of failure is shaded.}}
	\label{fig:interpret}
\end{figure}

\subsection*{{A new approach}}
{We address these challenges by introducing a novel machine learning approach called the \emph{Penalized Profile Support Vector Machine based on the Gabriel edited set}. Our methodology is aimed at establishing a surrogate decision boundary that approximates the true nonlinear decision boundary in an effort to increase interpretability, especially on small training sets, while providing competitive accuracy. The methodology has two components: ($\bullet$) It uses an adaptive sampling strategy designed to strategically allocate points near the decision boundary; ($\bullet$) It uses derivative information to construct a locally linear classification boundary that remains consistent with the geometry of the true nonlinear decision boundary computed. The method achieves robust accuracy comparable to  other state of the art classification methods while being distinguished by the simplicity and fidelity of the surrogate decision boundary.}

\subsection*{Outline}

In Section~~\ref{sec:setup}, we describe relevant work on which we build our methodology. In Section~~\ref{sec:method}, we present our methodology, and we present two convergence results in Section~~\ref{sec:proof}. We show results from a numerical simulation study in Section~~\ref{sec:simul}. {We present an application to the Lotka--Volterra model for competing species in Section~~\ref{sec:LVM}.} In Section~~\ref{sec:conclu}, we present a conclusion. We present details, descriptions of other methodologies, and algorithms in the appendices.

\section{{Ingredients for constructing the surrogate decision boundary}}\label{sec:setup}

{We describe two ingredients common to the our classification methods. Additional details are provided in Appendix~\ref{sec:details}.}

\subsection{{Probability of Failure--Darts sampling}}\label{sec:POF}

{We consider approaches that build a classification model using a set of training points. In our approach, we construct the collection of training points using  an adaptive sampling strategy  designed to efficiently concentrate  points near the decision boundary  called \textit{Probability of Failure -- Darts (POF--Darts)} (\cite{POF-darts,Ebeida_Kd_Darts}).  
 POF--Darts determines a collection of sample points $\{x_i \}_{i=1}^n\subset \Lambda$. Each point is associated with a sphere with radius determined by an estimate of the distance to the nonlinear decision boundary. The sampling strategy spaces the points to avoid overlap of the associated spheres. The samples are computed along specified hyper-planes  (``darts'') to increase efficiency.}
 
 {POF--Darts tends to concentrate new samples in regions where the nonlinear decision boundary is poorly described by the existing samplej. Sample sets generated by POF--Darts tend to have a high density of sample points in a small neighborhood of the nonlinear decision boundary, see Section~~\ref{sec:proof}. To illustrate, we fix $x_3=1.65$, $q_0=3.75$, $T=5$ and initial samples = $10$ in the Brusselator model and show a set of POF--Darts samples in Fig.~\ref{fig:POF-bruss}.  We can see the characteristic high density of sample points in a neighborhood of the nonlinear decision boundary.}
 
 \begin{figure}[htb]
 	\centering
 	\subfloat[100 samples.]{\includegraphics[width=0.5\textwidth]{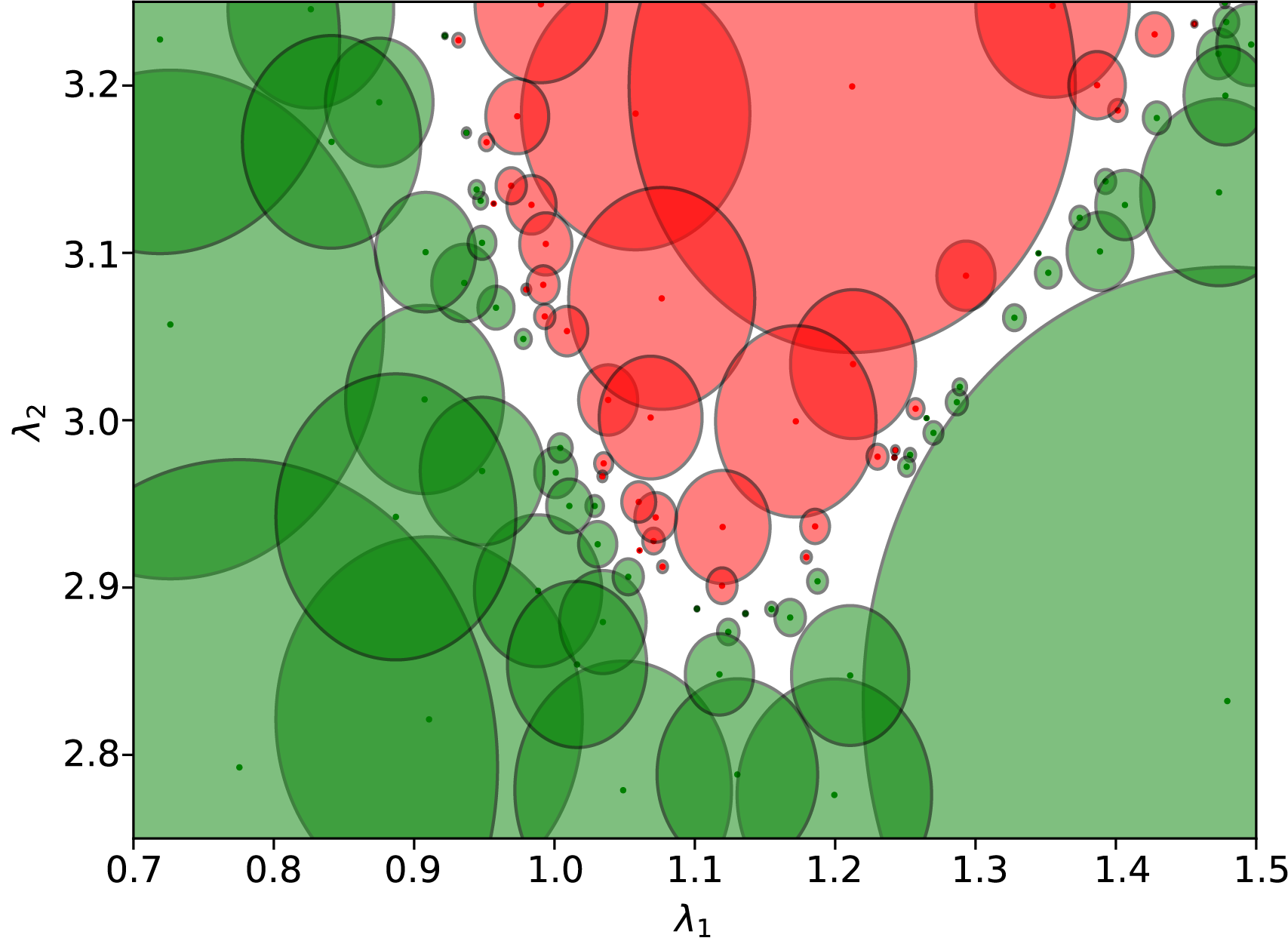}}\hfill
 	\subfloat[500 samples.]{\includegraphics[width=0.5\textwidth]{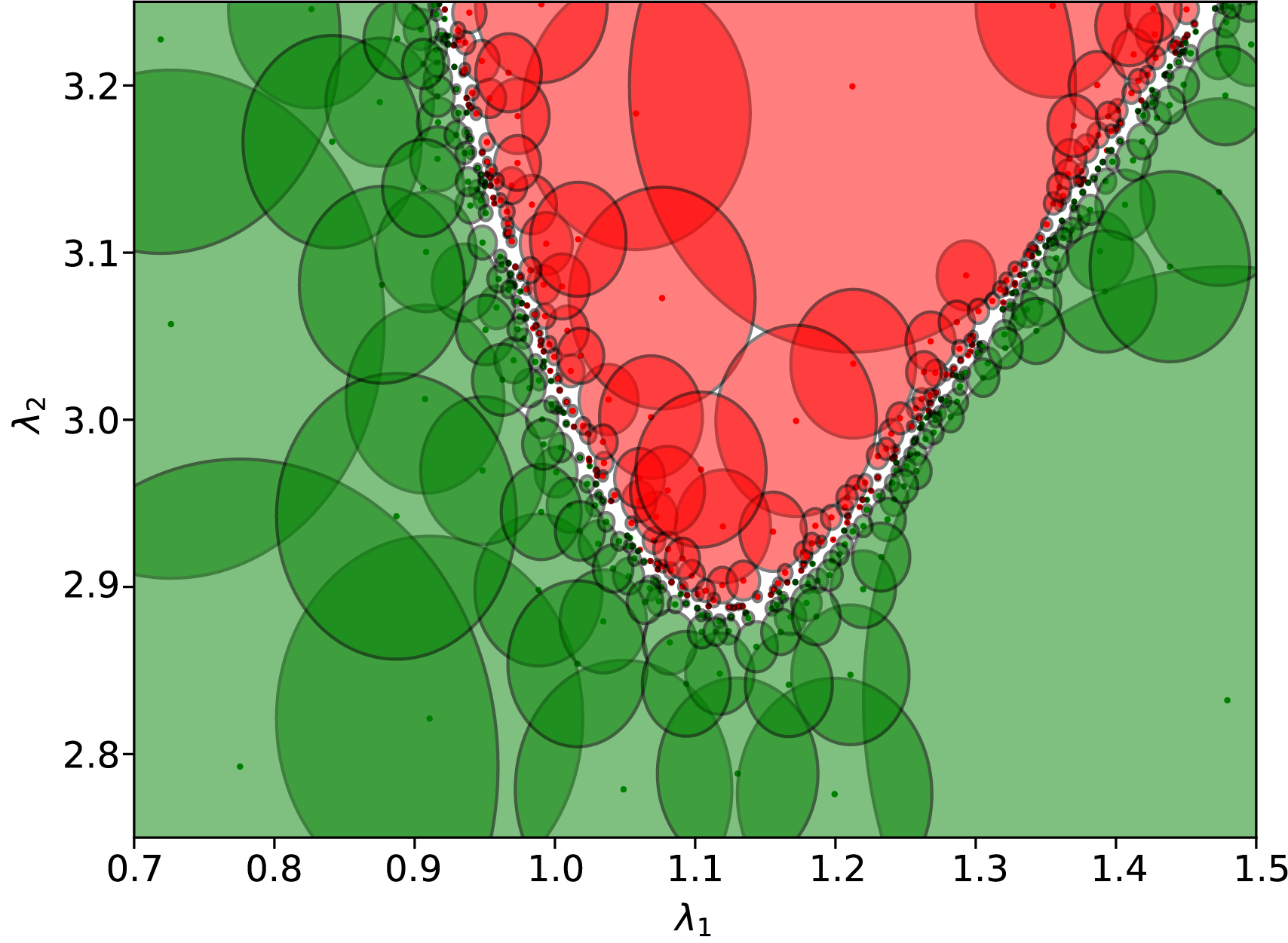}}\\
 	\caption{Examples of sample points and their corresponding spheres produced through POF--Darts sampling for the Brusselator model. Samples in negative and positive regions are colored red and green respectively.}
 	\label{fig:POF-bruss}
 \end{figure}
 
One consequence of concentrating sample points in a neighborhood of the nonlinear decision boundary is that the minimum distance between points in different classification sets decreases as the number of samples increases. {This is expected to improve  performance for many classification methods.}
 
{We use POF--Darts with one dimensional darts to generate the training set. In addition to the dimension of the darts, POF--Darts depends on an initial set of sample points and a parameter $\mathcal{L}$ that is related to an estimate of the distance between points in the initial set to the nonlinear decision boundary.} Details for POF--Darts are provided in Appendix~\ref{det:POFDarts} and we present Algorithms \ref{alg:POF} and \ref{alg:1ddart} in Appendix~\ref{Algs}.

\subsection{{The Gabriel edited set and Characteristic Boundary Points}}

{Our classification methodology uses the points from a training set that lie closest to the nonlinear decision boundary. Intuitively, the boundary is located within the region circumscribed by these points.  As the number of points in a sequence of  training sets increase, the circumscribed regions become ``narrower'' while the gaps between points on the boundary become smaller, thus suggesting that the nonlinear decision boundary is increasingly well described, see Section~~\ref{sec:proof}. }

{However, the points closest to the boundary must be determined without knowing the location of the boundary.  Our approach for this problem uses the Gabriel edited set. The \textit{Gabriel graph} has the property that two points are connected by a line segment in the graph if and only if the sphere containing the segment connecting the points as a diameter does not intersect any other points in the sample and the \textit{Gabriel neighbors} in a Gabriel graph are the points connected by a line segment in the graph and the \textit{Gabriel edited set (GES)} comprises Gabriel neighbors with different classifications. }

To be specific, we denote the \textit{training set} by $\{\mathcal{X}, \mathcal{Y}\}$ =$\{\mathcal{X}, \mathcal{Y}\}$ = $\{x_i,y_i\}_{i=1}^n$, where $\{x_i\}_{i=1}^n \subset \Lambda$ are samples in $\Lambda$ and we indicate classification of $x_i$ by setting $y_i = \pm 1$ for all $i$. We  denote the Gabriel neighbor pairs in the GES that have different labels by $\mathcal{G} =\{(x_i,x_j)\}_{(i,j)\in \mathcal{N}}$, where 
\[
\mathcal{N}=\{(i,j)\, |\, x_i \text{ and } x_j \text{ are Gabriel neighbors, } 1 \leq i \leq n, \, i < j \leq n, \, y_i \ne y_j\}
\] 
denotes the set of paired indices of Gabriel neighbors in $\mathcal{X}$.  We denote the individual points in $\mathcal{G}$ by $\{\mathcal{X}_{\mathcal{G}}, \mathcal{Y}_{\mathcal{G}}\} = \{x_i,y_i\}_{i \in \widehat{\mathcal{N}}}$, where $\widehat{\mathcal{N}} = \{i\,|\, (i,j)\in\mathcal{N} \}\bigcup \{j\,|\, (i,j)\in\mathcal{N} \}$. 

In addition to the GES, we use the \textit{Characteristic Boundary Points (CBPs)} $\overline{\mathcal{G}}=\{\overline{x}_{ij}\}_{(i,j) \in {\mathcal{N}}}$, where $\overline{x}_{ij} =\frac{1}{2}\big(x_i + x_j \big)$ for $(i,j)\in\mathcal{N}$, which are the midpoints between Gabriel neighbors within the GES that have different labels. 

We present the algorithm for determining the GES in Algorithm~\ref{alg:improvedGEA} in Appendix~\ref{Algs}. The GES is related to the Voronoi edited set, see Appendix~\ref{det:GES}.  The GES and CBP are a central feature of a classification method introduced by Pujol, et al (\cite{pujol2009}) that we call ``Pujol's method'', see Appendix~\ref{det:Pujol}. We use Pujol's method as a primary benchmark.

\section{{Profile and Penalized Profile Support Vector Machines based on the Gabriel edited set}}\label{sec:method}

In this section, we develop the two proposed classification methodologies. Using a single linear surrogate decision boundary for a nonlinear decision boundary tends to introduce significant misclassification. For this reason, we consider locally (piecewise) linear surrogate boundaries. The intuition is that a nonlinear decision boundary is approximately linear in sufficiently localized regions. Indeed, if the nonlinear decision boundary is known, then the approximation computed from tangent planes at a sufficiently dense set of points on the boundary provides the most efficient asymptotically accurate locally linear surrogate decision boundary.  The primary difference between the two methods is how the surrogate boundaries are constructed.

\subsection{The Profile SVM using the Gabriel edited set model}\label{sec:PSVMG}

{The first method is called the \textit{Profile SVM using the Gabriel edited set (PSVMG)}.  We build a local linear surrogate decision boundary using clusters of GES points constructed by clustering the CBPs using Lloyd's k-means algorithm (\cite{lloyd1982least}).  As each CBP is associated with a pair of Gabriel neighbors,  the constructed clusters of GES points always contain points from different classes. To avoid solving a complex optimization problem, we  fix the number of clusters  before the construction begins. }


{A well known issue with Lloyd's k-means clustering algorithm is that it may yield clusters that are imbalanced in terms of classification, which affects the accuracy of a locally linear surrogate decision model. We describe the alternative MagKmeans clustering algorithm used for PSVM in Appendix~\ref{App:PSVM}.} 

To build the locally linear surrogate decision boundary, we use ``soft margin'', or ``penalized'', linear  \textit{Support Vector Machines (SVMs)} which allow for the misclassification of trained points while seeking to find an optimal surrogate decision boundary. We define the hyperplanes using vectors ${w} $ and $b$,
\begin{gather*}
	L_1=\{x:  {w}^\top {x} + b = 0\},\\
	L_2=\{x: {w}^\top {x} - b = 0\},
\end{gather*}
with the margin of the region between $L_1$ and $L_2$  defined $|2b| / \|{w}\|$, where $\|\quad\|$ is the Euclidean norm. The hyperplanes are determined by the optimization problem,
\begin{equation}\label{softSVM}
	\min \frac{1}{2}  {w}^\top \, {w} + \beta\sum_{j = 1}^{n} \xi_j  \text{ subject to } y_i {w}^\top\, {x_i} \ge 1 - \xi_i \text{ and } \xi_i \ge 0,\; \forall i,
\end{equation}
where $\beta$ is a regularization parameter and $\xi_j$ represents a distance of $x_j$ to the wrong side of the margin for $1 \leq j \leq n $ ($\xi_j = 0$ if the point is on the correct side). The solution is found using Lagrange multipliers and Calculus as usual, see \cite{ZhuXuankang2022Elaf}. The training points closest to the nonlinear decision boundary are called the \textit{support vectors}. We present the the algorithm for computing the PSVMG model in Algorithm~\ref{alg:G-PSVM} in Appendix~\ref{Algs}. More details can be found in (\cite{ZhuXuankang2022Elaf}).

{Given a collection of SVM, the classification of a point in $\Lambda$ is computed using an ensemble average in which the weights are based on evaluating the Euclidean distances between the point being classified and the centroids of the clusters, with lower weights associated with larger distances. Roughly speaking, the formula is
\[
\textrm{classification of a point}  = \frac{\sum_{\textrm{clusters}} \frac{\textrm{classification by cluster SVM}}{\textrm{distance to cluster}}}{\sum_{\textrm{clusters}}\textrm{(distance to cluster)}^{-1}}.
\] }

\begin{figure}[htb]
	\centering
	\subfloat[Training set and the true boundary.]{\includegraphics[width= 6cm]{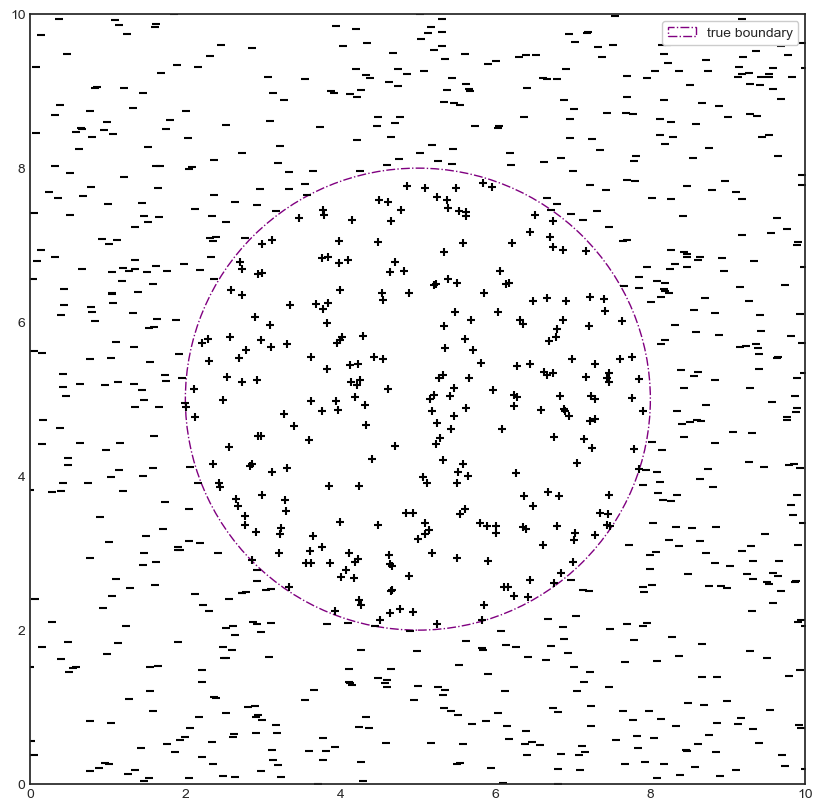}}
	\subfloat[The GES and the CBPs.]{\includegraphics[width= 6cm]{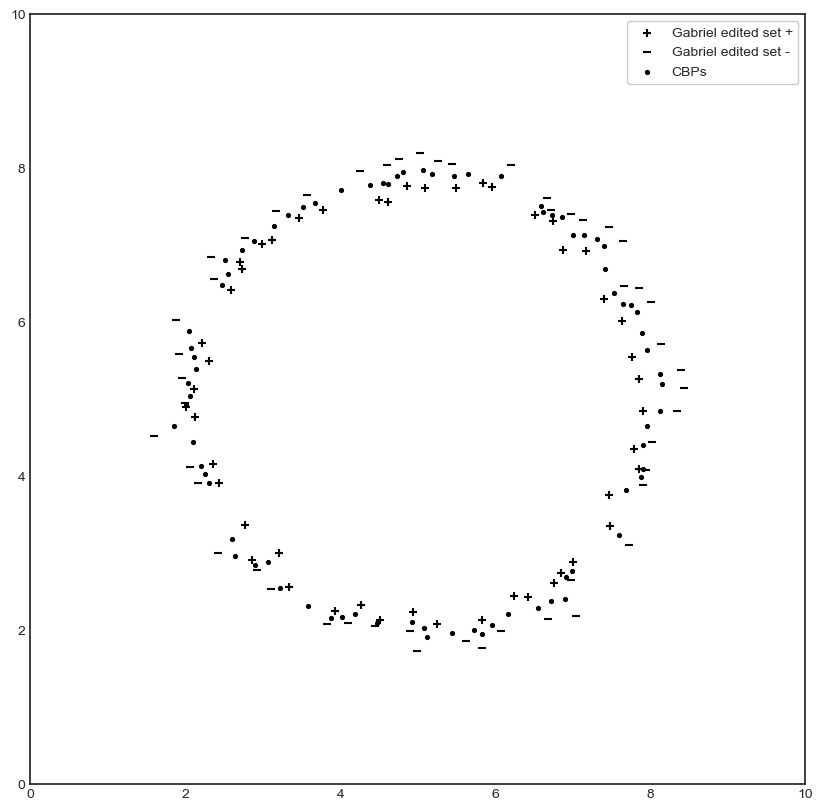}}\\
	\subfloat[8 clusters generated using the CBPs.]{\includegraphics[width= 6cm]{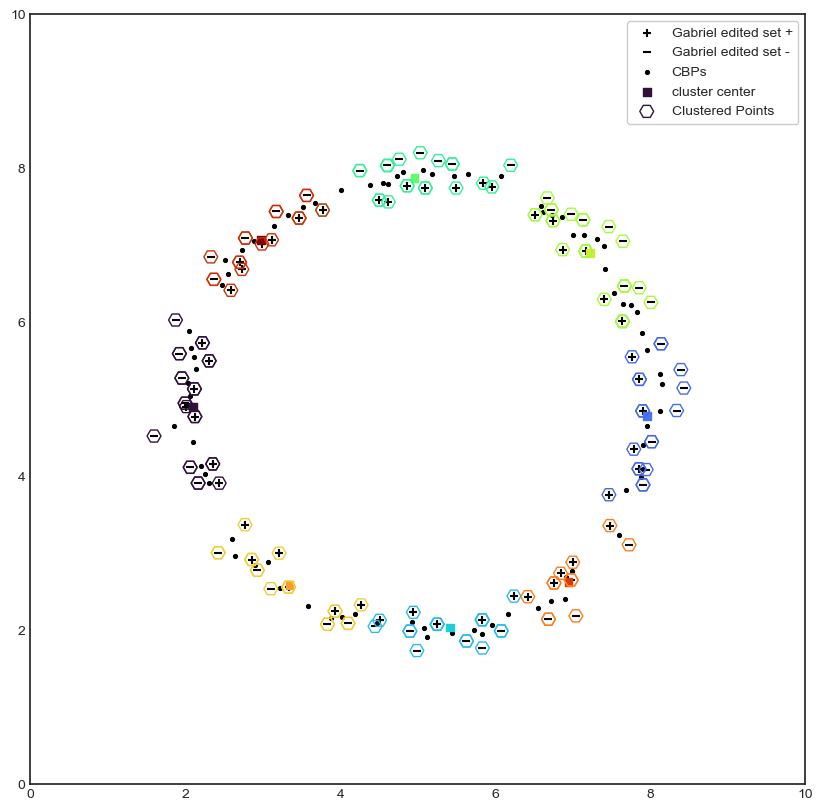}}
	\subfloat[Linear SVMs built on the clusters.]{\includegraphics[width= 6cm]{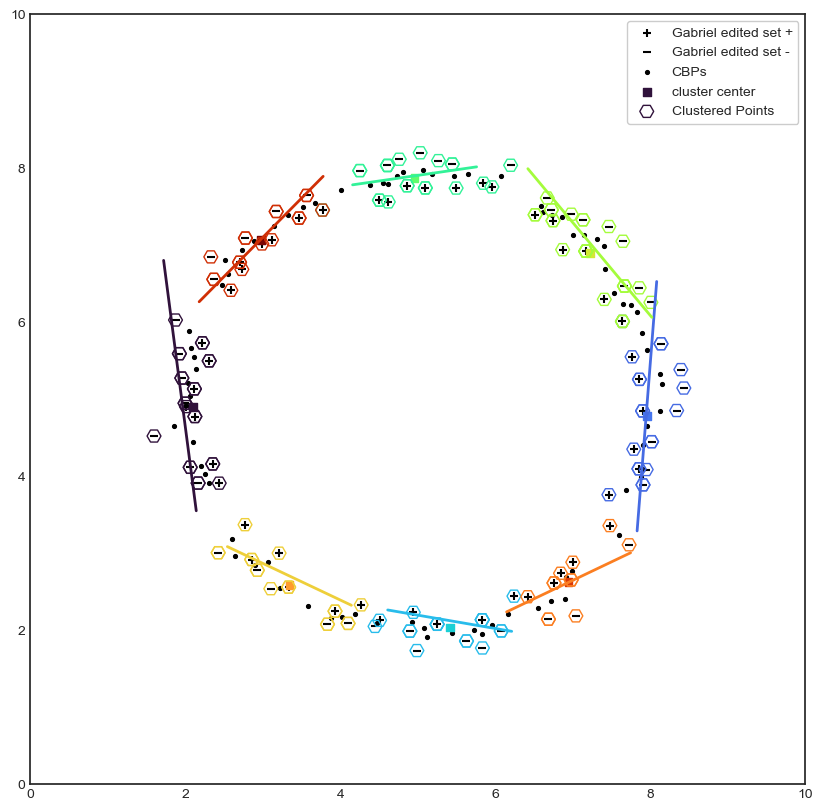}}\\
	\caption{Example of the performance of PSVMG on a circular decision boundary.}
	\label{fig:diagram GPSVMa}
\end{figure}

{PSVMG  produces clusters that are generally aligned with the nonlinear decision boundary. We use a circular decision boundary to illustrate the performance of PSVMG. We select $500$ samples uniformly from $\{x_i,y_i: x_i \in [0,10]\times[0,10]$ with $y_i = -1 $ if $ (x_{i1} - 5)^2 + (x_{i2} - 5)^2 \ge 9$ and $1$ otherwise$\}$.  In Fig.~\ref{fig:diagram GPSVMa}~(a), we show the circular boundary and the classification of the points. In Fig.~\ref{fig:diagram GPSVMa}~(b), we show the GES and CBPs. The eight clusters are shown in Fig.~\ref{fig:diagram GPSVMa}~(c) and the corresponding linear SVM models are shown in Fig.~\ref{fig:diagram GPSVMa}~(d). The clusters are roughly the same size. The size is  too large for a linear model to have high fidelity with the curved decision boundary. In  Fig.~\ref{fig:diagram GPSVMb}, we show the approximate decision boundary and the corresponding partition of $\Lambda$.}

\begin{figure}[htb]
	\centering
\includegraphics[width= 6cm]{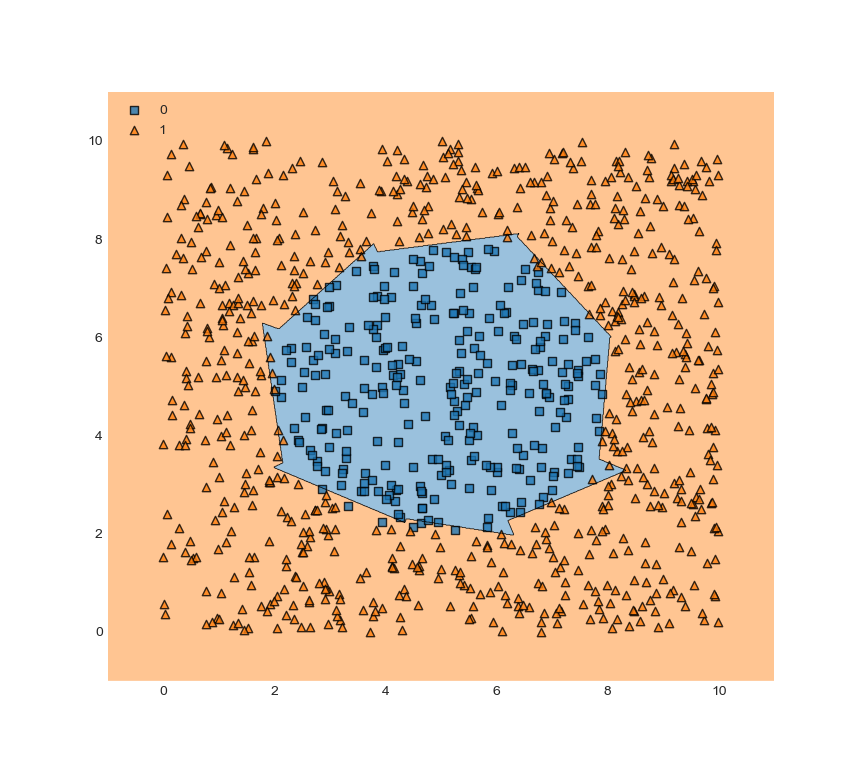}
	\caption{Decomposition of $\Lambda$ by classification from the PSVMG model for a circular decision boundary with labels colored in blue and orange.}
	\label{fig:diagram GPSVMb}
\end{figure}

We further illustrate in Fig.~\ref{fig:GPSVM_comparison}, where we show the PSVMG surrogate boundaries for three synthetic examples defined in Appendix~\ref{App:PSVM_comp}. The second example is complicated because the curvature of the nonlinear boundary varies significantly while the last two examples are difficult because of complex pattern of transitions between the regions of different classifications. For each example, we choose a training set of $500$ points distributed uniformly in $\Lambda$ and we set $\gamma = 0.5, \beta = 0.1$. 
For comparison,  we show the corresponding surrogate boundaries determined by PSVM with MagKmeans clustering (Appendix~\ref{App:PSVM}) in Fig.~\ref{fig:PSVM_comparison}.  We initiate PSVM  using a varying number of initial clusters. We see that the surrogate boundaries determined by PSVMG follow the geometry of the nonlinear decision boundary with relatively high fidelity compared to PSVM.

\begin{figure}[htb]
	\centering
	\includegraphics[width=.9\textwidth]{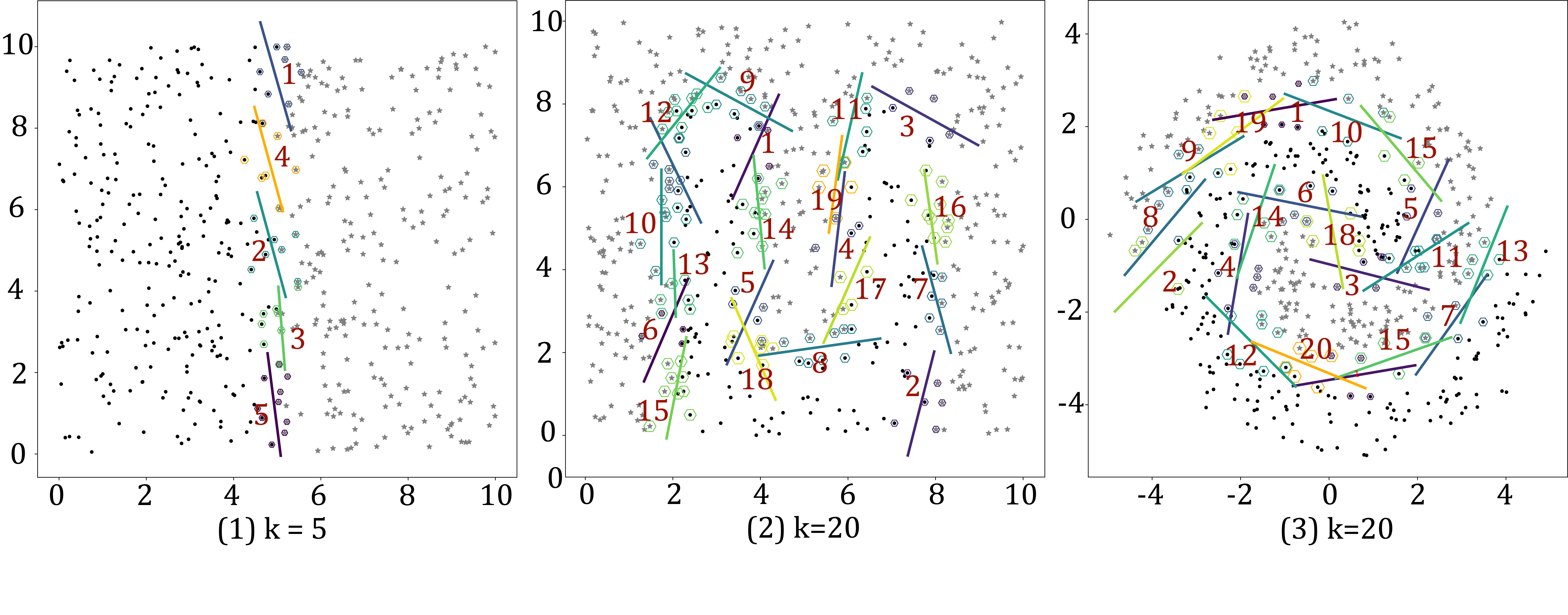}
	\caption{The approximate local linear SVM boundaries generated from PSVMG  with the indicated initial number of clusters $k$ for the synthetic examples defined in Appendix~\ref{App:PSVM_comp}. Compare to Fig.~\ref{fig:PSVM_comparison}.}
	\label{fig:GPSVM_comparison}
\end{figure}

\begin{figure}[htb]
	\centering
	\includegraphics[width=\textwidth]{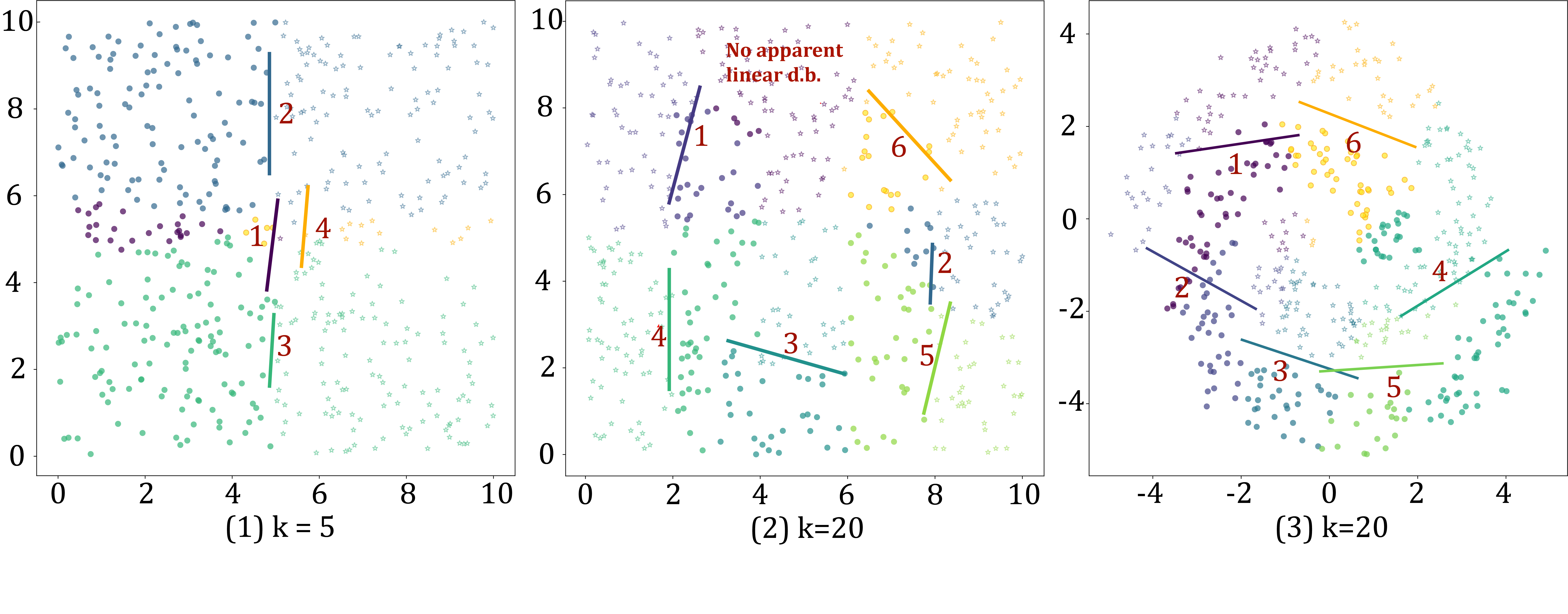}
	\caption{Typical approximate local linear SVM boundaries generated from PSVM with MagKmeans clustering starting with the indicated initial number of clusters $k$ for the synthetic examples defined in Appendix~\ref{App:PSVM_comp}. Compare to Fig.~\ref{fig:GPSVM_comparison}.}
	\label{fig:PSVM_comparison}
\end{figure}

\subsection{{The penalized Profile SVM using the Gabriel edited set model}}

{PSVMG tends to produce well-separated clusters, leading to poor classification of samples in regions ``between'' clusters. Moreover, the angles between most of the PSVMG SVMs and the local tangent to the nonlinear decision boundary varies more than is desirable. We introduce the \textit{penalized Profile SVM using the Gabriel edited set (PPSVMG)} model primarily  to capture the geometric complexity of the boundary more accurately, provide robustly smooth behavior with respect to clustering, and  maintain computational efficiency.}  

{We begin with an alternative approach to clustering CPBs. Setting each CBP as a cluster centroid,  we use the nearest neighbor rule to find its $K$ nearest CBPs, after which we build a cluster of GES referenced by the CBPs. This step creates more clusters than the previous step, namely $m$ clusters of size $K$.  We introduce an additional step intended to combine clusters that are similar, where two clusters are considered similar if they share a specified proportion (the \textit{similarity parameter}) of training points in common.  Thus, the number of clusters may decrease  in a controlled fashion. We present the algorithm clustering  in Algorithm~\ref{alg:PPSVMG_clustering} in Appendix~\ref{Algs} and illustrate the cluster reduction in Figures~\ref{fig:diagramGMSVM100}--\ref{fig:diagramGMSVM500}.}

{As mentioned, the PSVMG model may yield a locally linear SVM model that is not well aligned with the nonlinear decision boundary. This is an issue in particular when there are relatively few Gabriel neighbors and the line joining two Gabriel neighbors is far from orthogonal to the tangent plane of the nonlinear decision boundary at a nearby point.}

We construct a SVM that includes a penalty term defined to be  $1$ - the squared cosine of the deviation between the linear SVM and the boundary gradient, i.e., 
\[
\text{penalty} =  1 - \frac{ w^{\top} \nabla \overline{Q} \nabla \overline{Q}^{\top} w }{ w^{\top}w },
\]
where $w$ is an estimate of the distance to the nonlinear decision boundary and $\overline{Q}$ is the average mean gradient of the Gabriel edited points belonging to the cluster normalized to be a unit vector. The objective function for the linear SVM becomes,
\begin{multline*}
    \min_{w, b, \{\xi_{j}\} }  \frac{1}{2} w^{\top}w + \beta \sum_{j\in \widehat{\mathcal{N}}} \xi_{j} + \kappa \bigg(1 -  \frac{w^{\top} \nabla \overline{Q}\nabla \overline{Q}^{\top}w}{w^{\top}w} 
  \bigg)\\
     s.t \quad (w^{\top}x_j + b)y_j \ge 1 - \xi_{j},\; \xi_j\geq 0 \quad \forall j \in \widehat{\mathcal{N}},
\end{multline*}
where $\kappa$ is a user defined parameter in $[0,\infty)$. Solving this optimization problem is challenging because it requires finding the optimal solution at a saddle point. We relax the problem by finding the optimal rotation of $w$ with respect to the original SVM to make it more parallel to the tangent plane of the nonlinear decision boundary. We let $\theta$ denote the angle of rotation of $w$ normalized to $[0,1]$. The decision norm $w(\theta)$ after rotation is,
\[
w(\theta) = \theta w^* + (1 - \theta )w_{Q} = \theta (w^* - w_{Q}) + w_{Q}, \quad \theta \in [0,1]
\]
with $ w_{Q} = \frac{w^{*\top}\overline{Q}}{\overline{Q}^{\top}\overline{Q}}\, \overline{Q}$. When $ \theta  = 0 $, $ w^* $ is rotated in the same direction as $ \nabla Q $, and when $\theta  = 1 $, $ w^* $ remains unchanged. The relaxed penalty term is:
\[
\text{relaxed penalty} = 1 - \bigg(\theta \frac{w^{*\top}\overline{Q}^{\top}\overline{Q}w^*}{w^{*\top}w^*} + (1- \theta )\bigg) = \theta  - \theta \frac{w^{*\top}\overline{Q}^{\top}\overline{Q}w^*}{w^{*\top}w^*},
\]
and the primal problem is to minimize:
\begin{multline*}
    \min_{\theta , w, b, \{\varepsilon_{j}\} }  \frac{1}{2} w(\theta )^{\top}w(\theta ) + \beta\sum_{j\in \widehat{\mathcal{N}}} \xi_{j} + \kappa t(1 - \frac{w^{*\top}\overline{Q}^{\top}\overline{Q}w^*}{w^{*\top}w^*})\\
     s.t. \quad (w^{\top}x_j + b)y_j \ge 1 - \xi_{j}, \; \xi(j)\geq 0 \quad \forall j\in\widehat{\mathcal{N}}.
\end{multline*}
The Lagrangian dual form is, 
\begin{align*}
     &\max_{\{\alpha_j\}, \{\mu_j\} } \min_{\theta , w, b, \{\xi_{j}\} } \frac{1}{2} w(\theta )^{\top} w(\theta ) + \beta \sum_{j\in \widehat{\mathcal{N}}} \xi_j + \kappa \theta  \bigg(1 - \frac{w^{*\top}\overline{Q}^{\top}\overline{Q}w^*}{w^{*\top}w^*}\bigg) \\
    & \qquad \qquad \qquad \qquad - \sum_{j\in \widehat{\mathcal{N}}}  \alpha_j \big((w(\theta )^{\top}x_j + b)y_j + \xi_j - 1\big) - \sum_{j\in \widehat{\mathcal{N}}} \mu_j \xi_j\\
    &\qquad \qquad  \qquad \qquad \qquad \qquad \qquad \qquad \qquad \qquad s.t. \quad \quad \alpha_j, \mu_j, \xi_j \ge 0 \quad \forall j \in \widehat{\mathcal{N}}. 
\end{align*}
Setting the derivatives with respect to $\theta $, $\xi_{j}$, and $b$ to $0$ yields,
\begin{equation*}
        w(\theta )^{\top}(w^* - w_{Q}) + \kappa \bigg(1 - \frac{w^{*{\top}}\overline{Q}^{\top}\overline{Q}w^*}{w^{*{\top}}w^*}\bigg) - \sum_{j\in \widehat{\mathcal{N}}} \alpha_j (w^* - w_{Q})^{\top}x_jy_j = 0,
    \end{equation*}
    \begin{equation*}
        (\theta (w^* - w_{Q}) + w_{Q})^{\top}(w^* - w_{Q}) =  \sum_{j\in \widehat{\mathcal{N}}} \alpha_j (w^* - w_{Q})^{\top}x_jy_j - \kappa \bigg(1 - \frac{w^{*{\top}}\overline{Q}^{\top}\overline{Q}w^*}{w^{*{\top}}w^*}\bigg),
    \end{equation*}
    \begin{equation}\label{eqn_t}
        \Rightarrow \theta = \frac{\sum_{j\in \widehat{\mathcal{N}}} \alpha_j (w^* - w_{Q})^{\top}x_jy_j - w_{Q}^{\top}(w^* - w_{Q}) - \kappa (1 - \frac{w^{*{\top}}\overline{Q}^{\top}\overline{Q}w^*}{w^{*{\top}}w^*})}{(w^* - w_{Q})^{\top}(w^* - w_{Q})} .
    \end{equation}
The optimal solution can be found by solving the linear problem:
\begin{align}
    \label{optimization}
    \max_{\alpha} \left(\frac{1}{2} w(\theta )^{\top} w(\theta )  + \sum_{j\in \widehat{\mathcal{N}}} \alpha_j + \kappa \theta   \bigg(1 - \frac{w^{*{\top}}\overline{Q}^{\top}\overline{Q}w^*}{w^{*{\top}}w^*}\bigg)- \sum_{j\in \widehat{\mathcal{N}}} \alpha_j w(\theta )^{\top}x_jy_j\right).
\end{align}
After obtaining the optimal solution for 
$\alpha$, $\theta$ and $ w(\theta)$ can be directly calculated using (\ref{eqn_t}), and the support vectors can be identified by selecting the training points with nonzero $\alpha$. Then, $b$ can be obtained by averaging the values of $b_j$, 
\[
b_j = y_j - w(\theta)^{\top}x_j.
\]

\begin{figure}[htb]
	\centering
	\includegraphics[width=0.8\textwidth]{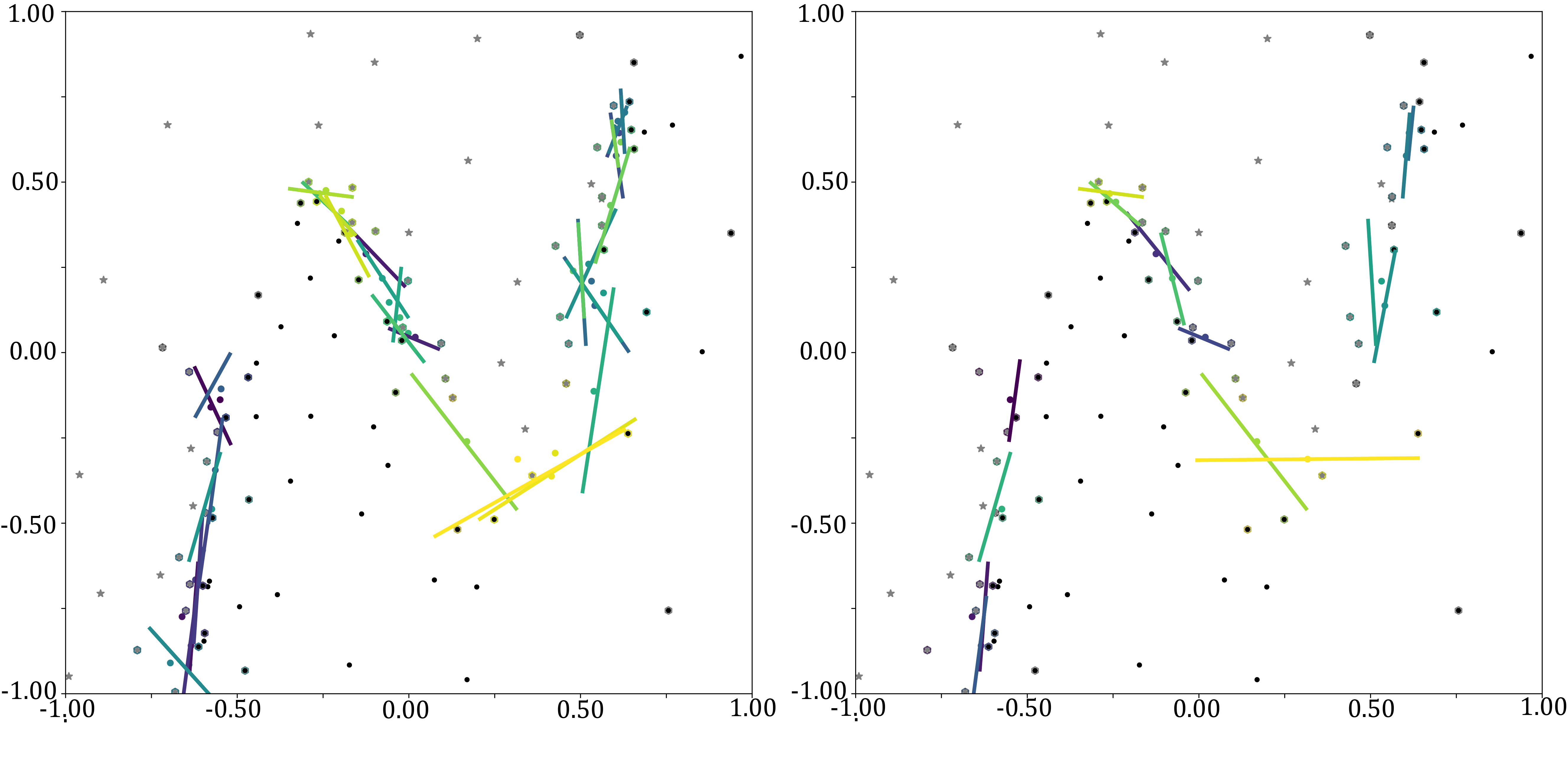}
	\caption{Classification results for the  PPSVMG model with initial cluster size $K=3$ and training set with $n=100$ points. Left: $\kappa= 0$ and similarity = $1$. Right: $\kappa= 10$ and similarity = $.4$. }
	\label{fig:diagramGMSVM100}
\end{figure}

{The penalization stabilizes the behavior of the approximate decision boundary, particularly in situations when the size of data clusters is small and decreases the sensitivity of the linear SVM models to  variations in cluster size and location of cluster point, thereby improving the robustness of classification.}

\begin{figure}[htb]
	\centering
	\includegraphics[width=0.8\textwidth]{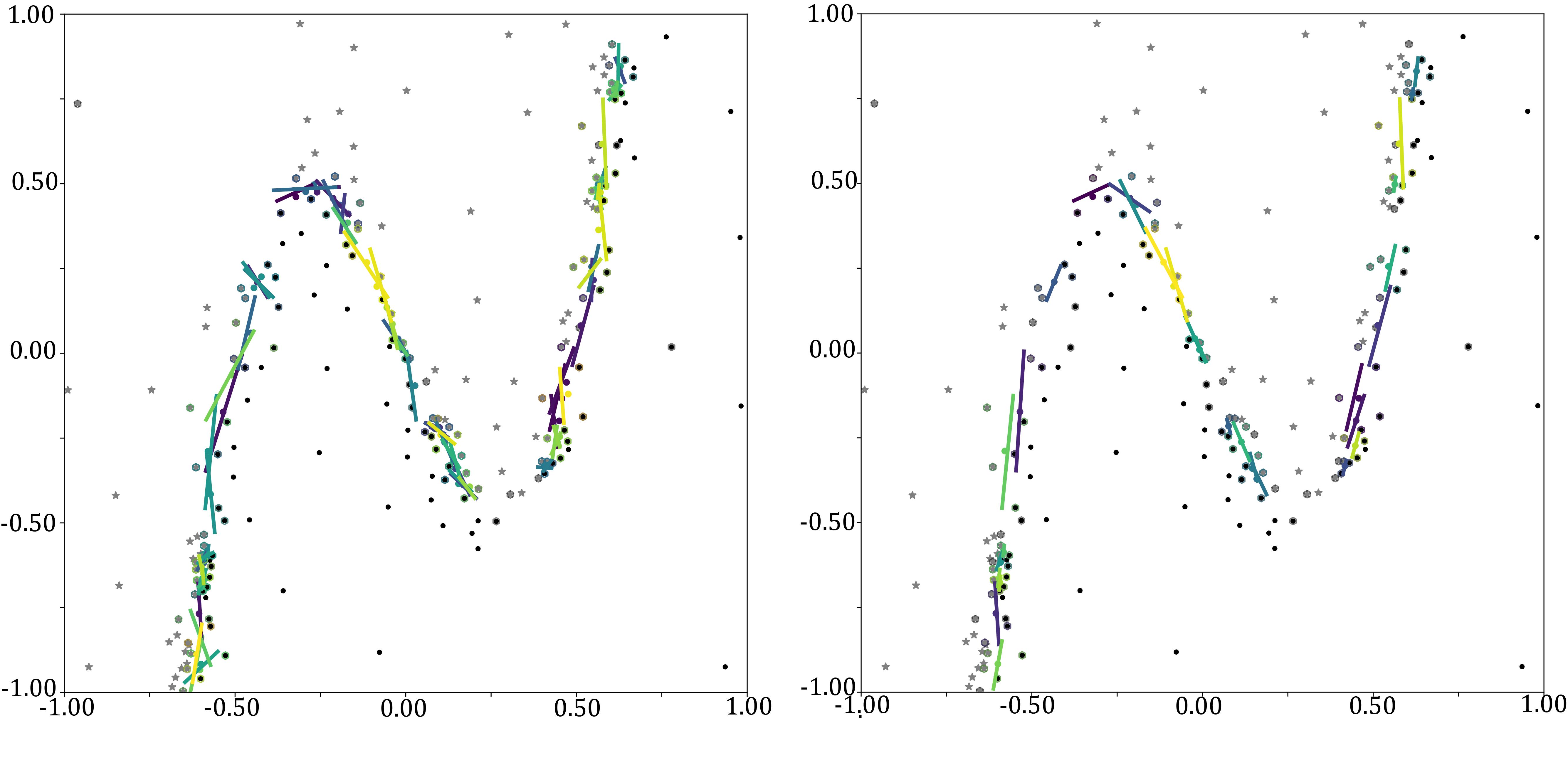}
	\caption{Classification results for the PPSVMG model with initial cluster size $K=3$ and training set with $n=200$ points. Left: $\kappa= 0$ and similarity = $1$. Right: $\kappa= 10$ and similarity = $.4$. }
	\label{fig:diagramGMSVM200}
\end{figure}

We use the Brusselator example defined in equation~\ref{Brussmodel} with a two dimensional parameter domain: 
\[
x  = (x_1, x_2, x_3)^\top \in \Lambda = [0.7,1.5]\times [2.75,3.25]\times[1.65] \subset \mathbb{R}^3,
\] 
to illustrate the effect of penalization. We set T = 5 and $q_0 = 3.75$. We use POF--Darts to draw the samples, with $\mathcal{L} = 1.5$ and initial points = 10. We show the results of varying the penalization parameter and similarity parameter in Fig.~\ref{fig:diagramGMSVM100}, \ref{fig:diagramGMSVM200}, \ref{fig:diagramGMSVM300}, and \ref{fig:diagramGMSVM500}, for varying training set sizes. As the similarity parameter increases, more clusters are combined, resulting in few linear SVM in the model. As the penalization parameter increases, the linear SVM models become more parallel to the tangent of the nonlinear decision boundary in that region.

\begin{figure}[htb]
	\centering
	\includegraphics[width=0.8\textwidth]{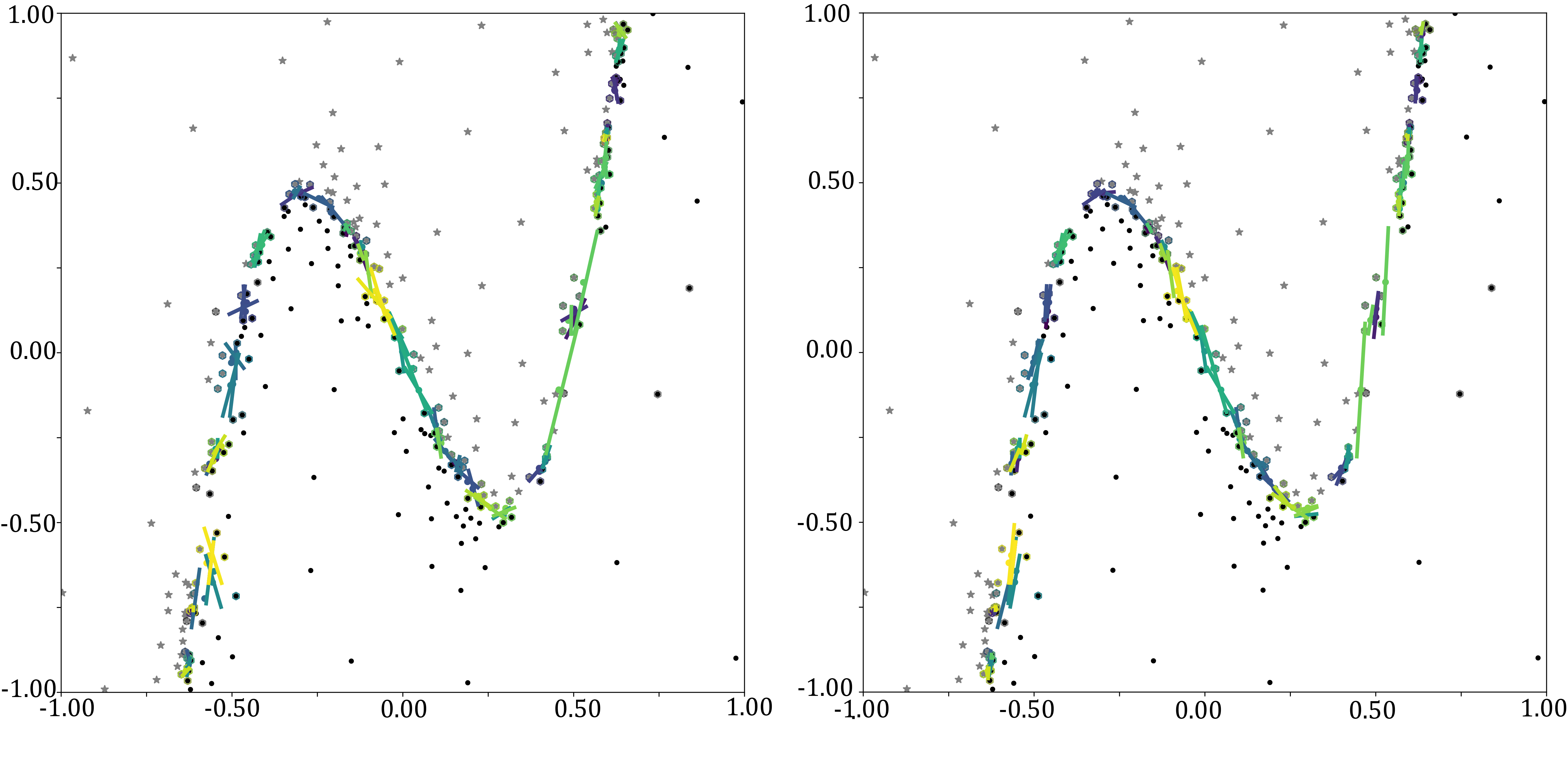}
	\caption{Classification results for the PPSVMG model with initial cluster size $K=3$ and training set with $n=300$ points. Left: $\kappa= 0$ and similarity = $1$. Right: $\kappa= 10$ and similarity = $.4$. }
	\label{fig:diagramGMSVM300}
\end{figure}

\begin{figure}[htb]
	\centering
	\includegraphics[width=0.8\textwidth]{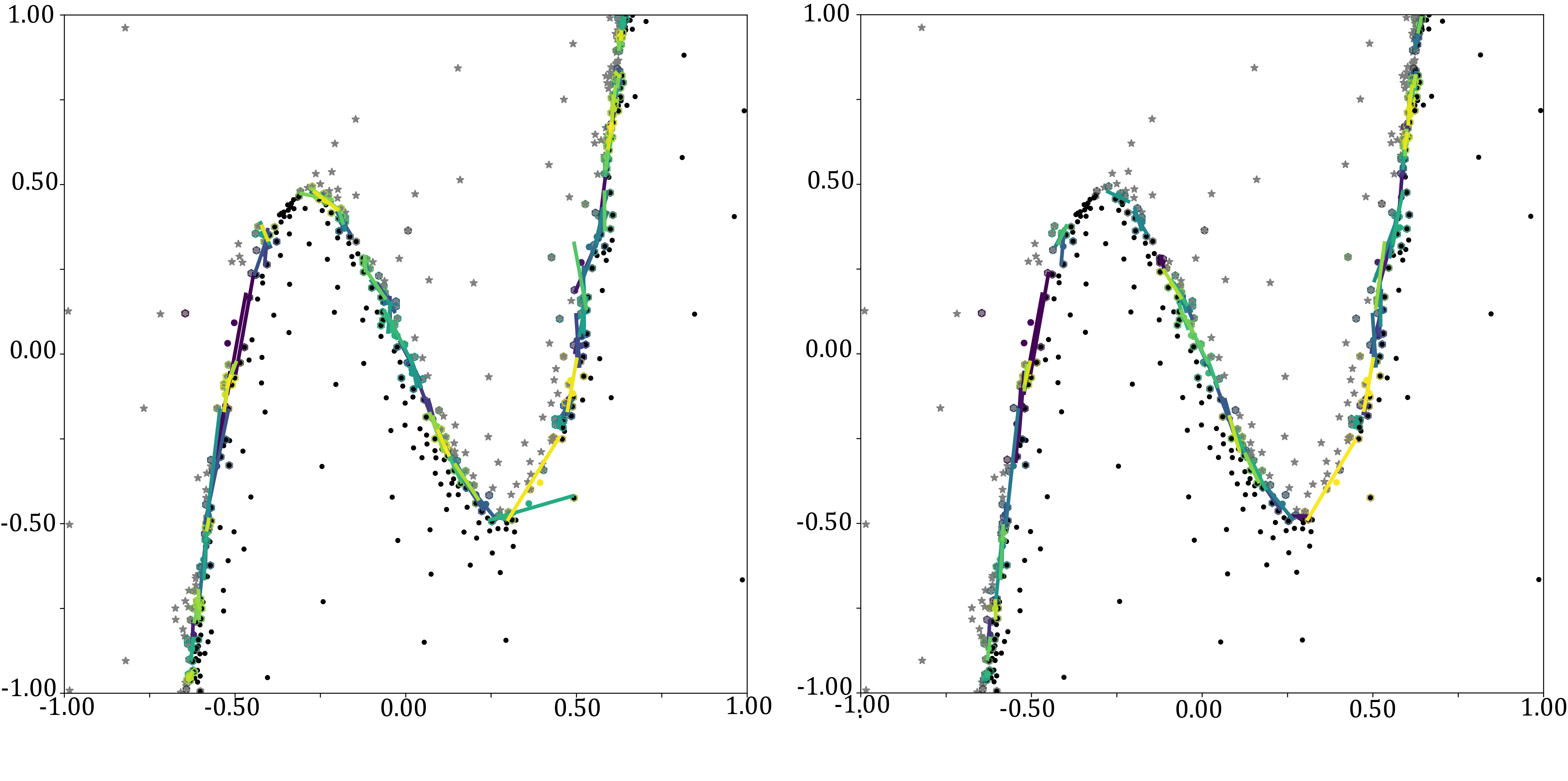}
	\caption{Classification results for the  PPSVMG model with initial cluster size $K=3$ and training set with $n=500$ points. Left: $\kappa= 0$ and similarity = $1$. Right: $\kappa= 10$ and similarity = $.4$. }
	\label{fig:diagramGMSVM500}
\end{figure}

\subsection{{Aspects of implementation}}\label{sec:implPPSVMG}

{We discuss some key aspects for using PSVMG and PPSVMG in practice.}

\subsubsection*{{Computing the gradient of $Q$}}

{The need to compute derivatives of a quantity obtained from a computer model  with respect to parameters in the model arises in sensitivity analysis, optimal control, and uncertainty quantification. Fortunately, such derivatives can be obtained with relative ease. Differentiating the process model with respect to a parameter yields a new \textit{linear} process model whose solution is the derivative of the solution of the process model with respect to the parameter. The Chain Rule can then be used to easily compute the corresponding derivative of a quantity depending on the solution of the process model. The linear process models for derivatives are generally easier and computationally cheaper to solve than the original process model. As a rule of thumb computing $Q$ and its gradient, requires significantly less than twice the computational work of computing just $Q$.}

{We use this approach to compute  $\nabla Q$ in this paper. The derivatives were computed by hand because the models are simple. Details are supplied in \cite{ZhuXuankang2022Elaf}. Fortunately, the tedious work of computing derivatives of a computer code with respect to parameters can be automated by \textit{Automatic Differentiation (AD)} methodology, which is implemented in a number of public software packages that can be applied to a computer code for a process model \cite{Neidinger2010}. }

\subsubsection*{{Tuning parameters}}

 {We initiate POF--Darts sampling with $10$ uniformly distributed random samples and choose $\mathcal{L} = 1.5$. Experimentally, this gives a good balance between underestimating the distance of the starting sample points to the nonlinear decision boundary and the number of iterations required to obtain training points close to the boundary.  We also experimented with including  $\mathcal{L}$ in the cross-validation optimization with the other hyperparameters, but this did not  improve results.  We optimize the other hyperparameters in PPSVMG using cross--validation to achieve the highest average training accuracy. For the cross--validation, the training set is divided into $10$ ``folds.'' Treating each fold as a validation set, the model is trained on the points in the other folds  and tested on the validation set. This is repeated using each fold as a validation set exactly once. Below, we use training sizes of 40, 60, 80, 100, 120, 150, and 200, with a testing size of 2000. }

\subsubsection*{{Execution time}}
{The great majority of the execution time for the examples in this paper was used for cross--validation selection of hyperparameters  because this involves solving multiple nonconvex optimization problems. The execution time for a single evaluation of PSVMG and PPSVMG for fixed hyperparameter values is $O(kd)$, where $k$ is the number of clusters and $d$ is the dimension.}

\section{Convergence properties of the GES}\label{sec:proof}
We prove two  convergence results for the Gabriel neighbors. We begin by showing that the Gabriel neighbors approach the nonlinear decision boundary as the number of samples increases.

Recall that $\mathcal{G}_n$ denotes the Gabriel Edited Set for the training set $\{\mathcal{X}_n,\mathcal{Y}_n\}=\{x_i,y_i\}_{i=1}^n$ with $n$ sample points and $\mathcal{N}_n$ denote the indices of Gabriel neighbors. We set $M_n = \max_{(i,j)\in\mathcal{N}_n}{\|x_i - x_j \|}$ denote the maximum Euclidean distance between  pairs of Gabriel neighbors in $\mathcal{G}_n$.

\begin{theorem}\label{thm:Gneigh}
	 For any $\epsilon> 0$, 
\[
\lim_{n\to \infty} P(|M_n| > \epsilon) = 0,
\]
where $P$ is the uniform probability measure on $\Lambda$.
\end{theorem}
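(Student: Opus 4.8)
The plan is to show that, with probability tending to one, the random sample $\mathcal{X}_n$ leaves no ball of a fixed positive radius inside $\Lambda$ empty, and then to note that a long bichromatic Gabriel edge would force precisely such an empty ball. First I would recall the combinatorial description of Gabriel neighbours: $x_i$ and $x_j$ are Gabriel neighbours of $\mathcal{X}_n$ exactly when the open ball $B_{ij}$ having the segment $[x_i,x_j]$ as a diameter --- the ball centred at the characteristic boundary point $\overline{x}_{ij}=\tfrac12(x_i+x_j)$ with radius $\tfrac12\|x_i-x_j\|$ --- contains no sample point other than $x_i$ and $x_j$. On the event $\{M_n>\epsilon\}$ (adopting the convention $\max\emptyset=0$, so this event forces $\mathcal{N}_n\neq\emptyset$) there is a pair $(i,j)\in\mathcal{N}_n$ with $\|x_i-x_j\|>\epsilon$. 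Since $y_i\neq y_j$ and $Q$ is continuous on the segment $[x_i,x_j]\subset\Lambda$ --- here I use that $\Lambda$ is convex, as it is in all our examples --- the intermediate value theorem produces a point $z\in Q^{-1}(q_0)$ on that segment, hence $z\in B_{ij}$; this is the sense in which Gabriel neighbours sit across, and close to, the decision boundary. For the estimate itself I only retain that $B_{ij}$ is a sample--free open ball of radius exceeding $\epsilon/2$ whose centre $\overline{x}_{ij}$ lies in $\Lambda$, and in fact a.s.\ in $\operatorname{int}\Lambda$ (a.s.\ no sample lands on $\partial\Lambda$, and a midpoint of a convex set that involves an interior point is interior).

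The second step turns this into a union bound over a fixed finite family. I would fix a finite cover $\overline{\Lambda}\subset\bigcup_{k=1}^{N}B(c_k,\epsilon/8)$ by balls of radius $\epsilon/8$, which exists because $\Lambda$ is bounded, and retain only the indices for which $v_k:=P\big(B(c_k,\epsilon/8)\big)>0$. If $\{M_n>\epsilon\}$ occurs, choose $k$ with $\overline{x}_{ij}\in B(c_k,\epsilon/8)$; then for every $y\in B(c_k,\epsilon/8)$ one has $\|y-\overline{x}_{ij}\|\le\|y-c_k\|+\|c_k-\overline{x}_{ij}\|<\tfrac{\epsilon}{4}<\tfrac{\epsilon}{2}<\tfrac12\|x_i-x_j\|$, so $B(c_k,\epsilon/8)\subset B_{ij}$, while $\|x_i-c_k\|\ge\tfrac12\|x_i-x_j\|-\tfrac{\epsilon}{8}>\tfrac{3\epsilon}{8}>\tfrac{\epsilon}{8}$ and likewise for $x_j$, so $x_i,x_j\notin B(c_k,\epsilon/8)$. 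Hence $B(c_k,\epsilon/8)$ is entirely sample--free, and $v_k>0$ because it contains $\overline{x}_{ij}\in\operatorname{int}\Lambda$. Since the $n$ samples are i.i.d.\ uniform on $\Lambda$ and the exceptional events (a sample on $\partial\Lambda$ or on one of the spheres $\partial B_{ij}$, or $Q(x_k)=q_0$ for some $k$) are $P$--null --- the last because $Q^{-1}(q_0)$ is a manifold of codimension at least one ---
\[
P(M_n>\epsilon)\ \le\ \sum_{k=1}^{N}P\big(B(c_k,\epsilon/8)\cap\mathcal{X}_n=\emptyset\big)\ =\ \sum_{k=1}^{N}(1-v_k)^{n}\ \le\ N(1-v_0)^{n}\ \longrightarrow\ 0,
\]
with $v_0=\min_k v_k>0$.

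I expect the main obstacle to be the geometric bookkeeping forced by the \emph{boundedness} of $\Lambda$: the empty ball $B_{ij}$ certified by a long Gabriel edge can itself be large and protrude far outside $\Lambda$, so one cannot directly bound $P(B_{ij}\text{ empty})$ by a volume ratio. The remedy --- replacing $B_{ij}$ by a ball of the \emph{fixed} radius $\epsilon/8$ that is contained in $B_{ij}$ and meets $\Lambda$ in a set of uniformly positive $P$--measure, and making this uniform over the random location of $B_{ij}$ by passing to a finite cover of $\overline{\Lambda}$ --- is the crux, and convexity of $\Lambda$ (used to keep the segment $[x_i,x_j]$ and its midpoint $\overline{x}_{ij}$ inside $\Lambda$) is exactly what legitimizes it. Everything else is routine: the exceptional events are $P$--null, and the label constraint $y_i\neq y_j$ enters only through the intermediate value theorem to exhibit the boundary point $z$, so the same argument in fact shows that the longest edge of the full Gabriel graph of $\mathcal{X}_n$ tends to $0$ in probability, of which Theorem~\ref{thm:Gneigh} is the restriction to bichromatic edges.
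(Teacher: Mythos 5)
Your proof is correct, but it takes a genuinely different route from the paper's. The paper covers $Q^{-1}(q_0)$ by finitely many $\epsilon$-balls (compactness), inside each such ball places small balls lying entirely in the failure and the success regions, and shows via a binomial/normal-approximation argument that each of these fixed balls is hit by the sample with probability tending to one; it then concludes $P(M_n\le\epsilon)\to 1$ directly, without ever invoking the defining property of Gabriel edges. You instead argue through exactly that property: a bichromatic Gabriel edge of length $>\epsilon$ certifies a sample-free diametral ball of radius $>\epsilon/2$ centered at the CBP, and a fixed $\epsilon/8$-cover of $\Lambda$ together with a union bound gives $P(M_n>\epsilon)\le N(1-v_0)^n$. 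What your route buys: an explicit exponential rate, the stronger conclusion that the longest edge of the entire Gabriel graph (not only the bichromatic edges) tends to $0$, and, importantly, it supplies the geometric step the paper leaves implicit in its final ``It follows that'' --- namely why having samples of both classes near every boundary point excludes a long Gabriel edge, whose empty diametral ball may sit largely away from $Q^{-1}(q_0)$ (its CBP need not be near the boundary). What the paper's route buys is localization at the decision boundary: the fact it actually establishes (both classes are sampled in every neighborhood of every boundary point) is precisely what is reused in the proof of Theorem~\ref{thm:GESinter}. One caveat on your side: you assume $\Lambda$ is convex in order to place the CBP in $\operatorname{int}\Lambda$ (and for the intermediate value step, which your estimate does not actually need), whereas the theorem only assumes $\Lambda$ bounded; since all domains in the paper are boxes this is harmless, but it should be flagged as a hypothesis if your argument is to be stated at the paper's level of generality.
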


This result implies that the Gabriel neighbors converge to the nonlinear decision boundary as the number of samples in the training set increase. 

\begin{proof} 
	See Fig.~\ref{fig:proofboundary}. 	Let $S\subset \Lambda$ be a fixed sphere and  define 	$X_n = \sum_{i = 1}^{n} \mathbb{I}(x_i \in S)$, where $\mathbb{I}$ is the  indicator function. By the Central Limit Theorem (\cite{PT_book}), 
\[
X_n \xrightarrow{d} N\left(n\frac{\mu(S_n)}{\mu(\Lambda)}, n\,\frac{\mu(S_n)}{\mu(\Lambda)}\,\left( 1- \frac{\mu(S_n)}{\mu(\Lambda)}\right) \right),
\]	
where $N(x,\Sigma)$ is the normal distribution with mean $x$ and covariance $\Sigma$ and $\mu$ is the Lebesgue measure on $\Lambda$. 	Thus, 
	\begin{equation}\label{normcong}
		\lim_{n \to \infty }P(X_n = 0 ) =  \lim_{n\to \infty} \Phi\left( \frac{-n\frac{\mu(S_n)}{\mu(\Lambda)}}{\sqrt{n\,\frac{\mu(S_n)}{\mu(\Lambda)}\,\left( 1- \frac{\mu(S_n)}{\mu(\Lambda)}\right)} }  \right) = 0,
	\end{equation}
where $\Phi$ is the inverse of the cumulative distribution function for the normal distribution. 

Fix $\epsilon >0$. We have $Q^{-1}(q_0)\subset \bigcup_{z \in Q^{-1}(q_0)} S(z,\epsilon)$, where $S(a,r)$ is the open sphere centered at $a$ of radius $r$. Since $Q^{-1}(q_0)$ is compact, there are a finite set of $z_i \in Q^{-1}(q_0)$, $1 \leq i \leq m$, with $Q^{-1}(q_0)\subset \bigcup_{i=1}^m S(z_i,\epsilon)$. 
Because $Q^{-1}(q_0)$ is compact and the $S(z_i,\epsilon)$ are open, the intersections of $S(z_i,\epsilon)$ with regions $\{x:Q(x)\geq q_0\}$ and $\{x:Q(x)< q_0\}$ are each nonempty and open for $1\leq i\leq m$. 

For each $1 \leq i \leq m$, there exists a $0 < \eta < \epsilon$ and points $w_1 \in S(z_i,\epsilon) \cap \{x:Q(x)\geq q_0\}$ and $w_2 \in S(z_i,\epsilon) \cap \{x:Q(x)> q_0\}$, such that $S(w_1,\eta)\subset S(z_i,\epsilon) \cap \{x:Q(x)\geq q_0\}$ and $S(w_2,\eta)\subset S(z_i,\epsilon) \cap \{x:Q(x)> q_0\}$.  If $X^1_n = \sum_{i = 1}^{n} \mathbb{I}(x_i \in S(w_1,\eta))$ and $X^2_n = \sum_{i = 1}^{n} \mathbb{I}(x_i \in S(w_2,\eta))$, (\ref{normcong}) implies that $\lim_{n \to \infty }P(X^1_n = 0 )=0$ and $\lim_{n \to \infty }P(X^2_n = 0 )=0$. It follows that $\lim_{n\to \infty} P(M_n \leq \epsilon) = 1$. 
\end{proof}

\begin{figure}[htb]
	\centering
	\includegraphics[width= 10cm]{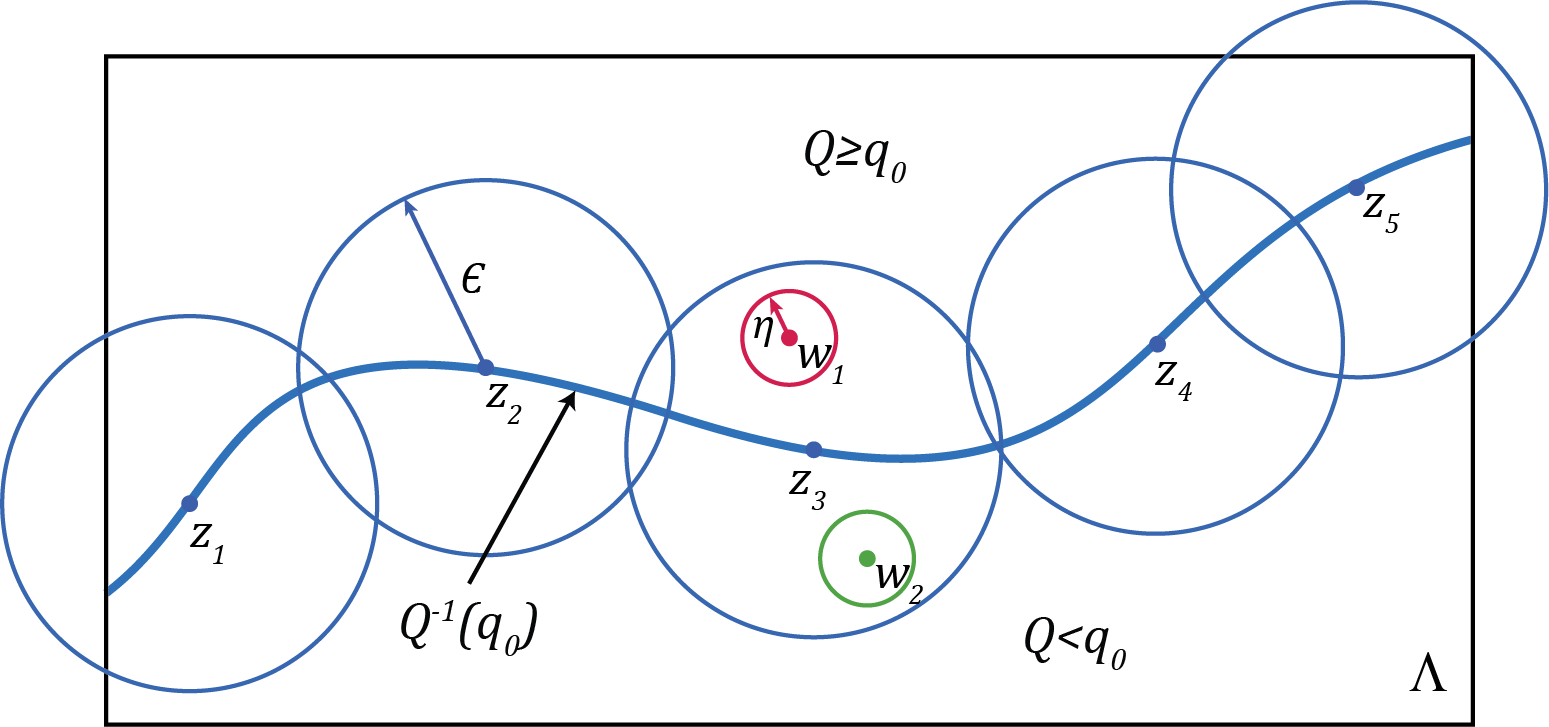}
	\caption{Illustration for the proof of Theorem~\ref{thm:Gneigh}}
	\label{fig:proofboundary}
\end{figure}

For the next result, let $\mathcal{E}_n$ denote the set of intersections $\widehat{x}_{ij}^n$ of the Gabriel graph of $x_i$ and $x_j$ for $(i,j)\in\mathcal{N}_n$ with $Q^{-1}(q_0)$. For $z \in Q^{-1}(q_0)$, let $\widehat{x}_{ij,z}^n$ denote the nearest neighbor of $z$ and points in  $\mathcal{E}_n$. 

\begin{theorem}\label{thm:GESinter}
	For any $\epsilon > 0$, 
	\[
	\lim_{n\to \infty}P(\|\widehat{x}_{ij,z}^n - z\|>\epsilon) = 0.
	\]
\end{theorem}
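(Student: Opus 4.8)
The plan is to reduce the statement to the assertion that, with probability tending to one, the Gabriel edited set $\mathcal{G}_n$ contains an edge whose segment lies entirely inside $S(z,\epsilon)$: since the intersection of such a segment with $Q^{-1}(q_0)$ then lies in $\mathcal{E}_n\cap S(z,\epsilon)$ and $\widehat{x}_{ij,z}^n$ is by definition the point of $\mathcal{E}_n$ nearest to $z$, this forces $\|\widehat{x}_{ij,z}^n-z\|\le\epsilon$ on that event. (On the same event $\mathcal{E}_n\ne\emptyset$, so $\widehat{x}_{ij,z}^n$ is well defined.)

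First I would reuse the concentration machinery from the proof of Theorem~\ref{thm:Gneigh}. Fix $\epsilon>0$ and put $\rho=\epsilon/3$. Exactly as there, the sets $S(z,\rho)\cap\{Q\ge q_0\}$ and $S(z,\rho)\cap\{Q<q_0\}$ are nonempty and open, hence contain balls $B^{+}$ and $B^{-}$; by \eqref{normcong} applied to these fixed balls, with probability tending to one the sample has points of both labels inside $S(z,\rho)$. Applying \eqref{normcong} to a sequence of straddling pairs $B^{\pm}\subset S(z,1/m)$ shows in addition that the smallest distance $\delta_n$ between opposite-labeled sample points lying in $S(z,\rho)$ tends to $0$ in probability; this is the shrinking-gap phenomenon remarked on after Theorem~\ref{thm:Gneigh}.

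On the event that $S(z,\rho)$ contains sample points of both labels, let $(a,b)$ be an opposite-labeled pair with $a,b\in S(z,\rho)$ minimizing $\|a-b\|$, so $\|a-b\|=\delta_n$. The key step is to check that $(a,b)$ is a Gabriel neighbor pair of the \emph{full} sample, hence $(a,b)\in\mathcal{G}_n$: if some sample point $c$ lay in the open ball $D$ with diameter $[a,b]$, then $\|c-a\|<\delta_n$ and $\|c-b\|<\delta_n$, so $\|c-z\|<\rho+\delta_n<\epsilon$ once $n$ is large, and $c$ is oppositely labeled to $a$ or to $b$; the corresponding pair $(a,c)$ or $(c,b)$ would then be opposite-labeled, contained in $S(z,\epsilon)$, and strictly shorter than $\delta_n$, contradicting minimality \emph{provided it also lies in} $S(z,\rho)$. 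Since $Q(a)\ge q_0>Q(b)$ (say), the continuous map $t\mapsto Q((1-t)a+tb)$ attains the value $q_0$, so $[a,b]$ meets $Q^{-1}(q_0)$; the resulting intersection point belongs to $\mathcal{E}_n\cap[a,b]\subset\mathcal{E}_n\cap S(z,\rho)$. Hence on this event $\|\widehat{x}_{ij,z}^n-z\|\le\rho<\epsilon$, and letting $n\to\infty$ gives the result.

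The main obstacle is the qualifier ``provided it also lies in $S(z,\rho)$'' above: a sample point $c$ that destroys the Gabriel property of $(a,b)$ could sit just outside $S(z,\rho)$, in the shell $S(z,\rho+\delta_n)\setminus S(z,\rho)$, which forces $a$ into the equally thin shell just inside $\partial S(z,\rho)$. This is where genuine work is needed: either (i) observe that the minimizing pair $(a,b)$ concentrates on $Q^{-1}(q_0)\cap S(z,\rho)$, whose intersection with that shell has vanishing measure, so the bad configuration has probability $\to0$ and is absorbed into the exceptional event; or (ii) extract $(a,b)$ from the enlarged ball $S(z,\rho+\delta_n)$ rather than $S(z,\rho)$, which removes the wrinkle at the cost of only an $o(1)$ enlargement of the final bound. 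In either case it is precisely the fact that $\delta_n\to0$ (itself a consequence of \eqref{normcong}) that keeps the error controllable. Finally, a uniform version, $\sup_{z\in Q^{-1}(q_0)}\|\widehat{x}_{ij,z}^n-z\|\le\epsilon$ with probability tending to one, follows by covering the compact set $Q^{-1}(q_0)$ with finitely many balls $S(z_k,\epsilon/2)$, running the above at each $z_k$, and taking a union bound.
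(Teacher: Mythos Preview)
Your approach and the paper's share the same skeleton: both invoke the machinery of Theorem~\ref{thm:Gneigh} to place oppositely labeled sample points inside $S(z,\epsilon)$ with probability tending to one, and both conclude from this that $\|\widehat{x}_{ij,z}^n-z\|<\epsilon$. The paper's argument is in fact only the two-line sketch ``there are points $x_i,x_j\in S$ with $y_i\ne y_j$ \ldots\ it follows that $\|\widehat{x}_{ij,z}^n-z\|<\epsilon$'', with no discussion of why an oppositely labeled \emph{Gabriel} edge (as opposed to merely an oppositely labeled pair) must lie in $S$. You go substantially further by isolating exactly this step and attacking it via the locally closest opposite-labeled pair, so in that sense your proposal is strictly more careful than the paper's own proof.

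That said, the obstacle you flag is genuine and your two patches are not yet watertight. Fix~(ii) is circular as written: $\delta_n$ is defined as the minimum over pairs in $S(z,\rho)$, so ``extract $(a,b)$ from $S(z,\rho+\delta_n)$'' presupposes the quantity it is meant to determine, and iterating does not obviously stabilize. Fix~(i) asserts a concentration property of the minimizing pair (that it avoids a thin shell at $\partial S(z,\rho)$) which would itself require a probabilistic argument you have not supplied; note that the minimizer could sit anywhere along $Q^{-1}(q_0)\cap S(z,\rho)$, including near the boundary of the ball. The iterative replacement $(a,b)\mapsto(c,b)$ that underlies your minimality argument can drift: each step stays within the previous Gabriel disk, but those disks are not nested, and the total displacement after finitely many steps is not bounded independently of $n$. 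In short, you have correctly located the one nontrivial point in the proof --- which the paper simply glosses over --- but the argument as proposed does not yet close it.
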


\begin{proof}
	
	Fix $z \in Q^{-1}(q_0)$ and $\epsilon> 0$. Let $S$ denote the sphere centered at $z$ with radius $\epsilon$. The proof of Theorem~\ref{thm:Gneigh} implies that with probability $1$, there are points $x_i, x_j \in \mathcal{S}$ with $y_i \neq y_j$ for all $n$ sufficiently large. It follows that with probability $1$, $\|\widehat{x}_{ij,z}^n - z\|<\epsilon$  for all $n$ sufficiently large. The result follows. 

\end{proof}

This result implies that the centers of the clusters used to construct the localized linear SVM models in the PPSVMG model become increasingly closer to the nonlinear decision boundary as the size of the training set increases. If we fix the cluster size $k$, a modification of the proof of Theorem~\ref{thm:Gneigh} implies that the points in the clusters that comprise the PPSVMG model also approach the nonlinear decision boundary and the points in the clusters become closer to each other as the size of the training set increases.

Together, Theorems~\ref{thm:Gneigh} and \ref{thm:GESinter} imply that the local linear SVM models comprising the PPSVMG model become closer to the nonlinear decision boundary as the initial sample size increases. Since the predicted classification of an arbitrary point is based on an ensemble average of predictions of the different models and we do not require that the linear models become parallel to the tangent plane of the nonlinear decision boundary, it is  difficult to analyze the prediction accuracy of the PPSVMG model as the sample size increases. However, numerical investigation, for example as seen in  Figures~\ref{fig:diagramGMSVM100}, \ref{fig:diagramGMSVM200}, \ref{fig:diagramGMSVM300}, and \ref{fig:diagramGMSVM500}, suggests that the linear SVM models comprising the PPSVMG model  converge to tangent planes of the nonlinear decision boundary.

\subsection{{Convergence rates}}

{Generally, it is hard to establish meaningful asymptotic convergence rates for estimates computed using adaptive sampling techniques like POF--Darts. Issues include the use of a feedback mechanism in sampling means properties like independence do not hold and the adaptive process involves additional computational expense. In the limit of very large numbers of samples, adaptive and non-adaptive sampling methodologies have similar asymptotic properties. The largest benefit of adaptive sampling is expected for small numbers of samples.}

{In the tests below, we consider performance through the ranges of small to large numbers of samples.}

\section{Experimental investigation}\label{sec:simul}
In this section, we investigate the performance of the PPSVMG model on several test problems.

\subsection{Model benchmark and validation}\label{compmethod}
We compare the performance of the PPSVMG model, measured as prediction accuracy versus the size of the training set, against several common classification methods given in Table~\ref{tab:methods_defination2}.  Operational details for these methods and the software packages are provided in Appendix~\ref{app:hyper}.

\begin{table}[htb]
    \centering
         \begin{tabular}{ |p{10.5cm}||p{2.5cm}| }
         \hline
         Method & Abbreviation\\
         \hline
         K nearest neighbors& KNN \\
         Random forest   & random\_forest \\
         XGBoost &   XGBoost  \\
         Multilayer perception & MLP  \\
         Kernel SVM (SVM$\_$kernel)    & SVM$\_$kernel \\
         Profile SVM & PSVM \\
         SVM\_KNN & SVM\_KNN\\
         Penalized Profile SVM on GES with clusters centered on CBPs& PPSVMG\\
         Profile SVM on GES with K-means clustering& PSVMG\\
         \hline
        \end{tabular}
    \caption{Abbreviations of the classification methods used in the comparison.}
    \label{tab:methods_defination2}
\end{table}

To conduct the comparison, we use the best performance for each model, obtained by determining the hyperparameters that achieve the highest average training accuracy using cross--validation. As mentioned, we use training sizes of 40, 60, 80, 100, 120, 150, and 200, with a testing size of 2000. {For samples generated by POF--Darts sampling, we set the initial samples as 10, the scale factor $\mathcal{L} = 1.5.$} To reduce random effects in presenting the results, we repeat the sampling and training process 20 times for each training size. 

For the cross--validation, the data set is divided into $10$ subsets or "folds." The model is trained on a subset of the data (defining the training set) and tested on the remaining portion (the validation set). This process is repeated several times, with each fold serving as the validation set exactly once. The results from each iteration are averaged to provide an overall performance metric, helping to ensure that the model's performance is not overly dependent on any particular subset of the data. Using cross--validation  reduces the risk of overfitting and yields a more accurate estimate of the model's performance on unseen data.  

\subsection{Test problems}
We consider four nonlinear decision boundaries.

\subsubsection*{Composed function 1}

The boundary is determined by the function,
\begin{equation}
    Q_1(x_1, x_2) = \left(x_2 - 0.5 \cdot \big(\tanh(20x_1) \cdot \tanh\left(20(x_1 - 0.5)\big) + 1\right) \cdot e^{0.2x_1^2}\right)^2,
\end{equation} 
where $(x_1, x_2) \in \Lambda = [0,2] \times [0,2]$ and the threshold value is $q_0=.5$. We plot the classification of $\Lambda$ in Fig.~\ref{fig:simulfun} (a). {The estimated probability of failure is $ 0.4562$ with standard deviation $0.00664$ computed with $5000$ points and $20$ repeats.}

\begin{figure}[htb]
	\centering
	\includegraphics[width= 6cm]{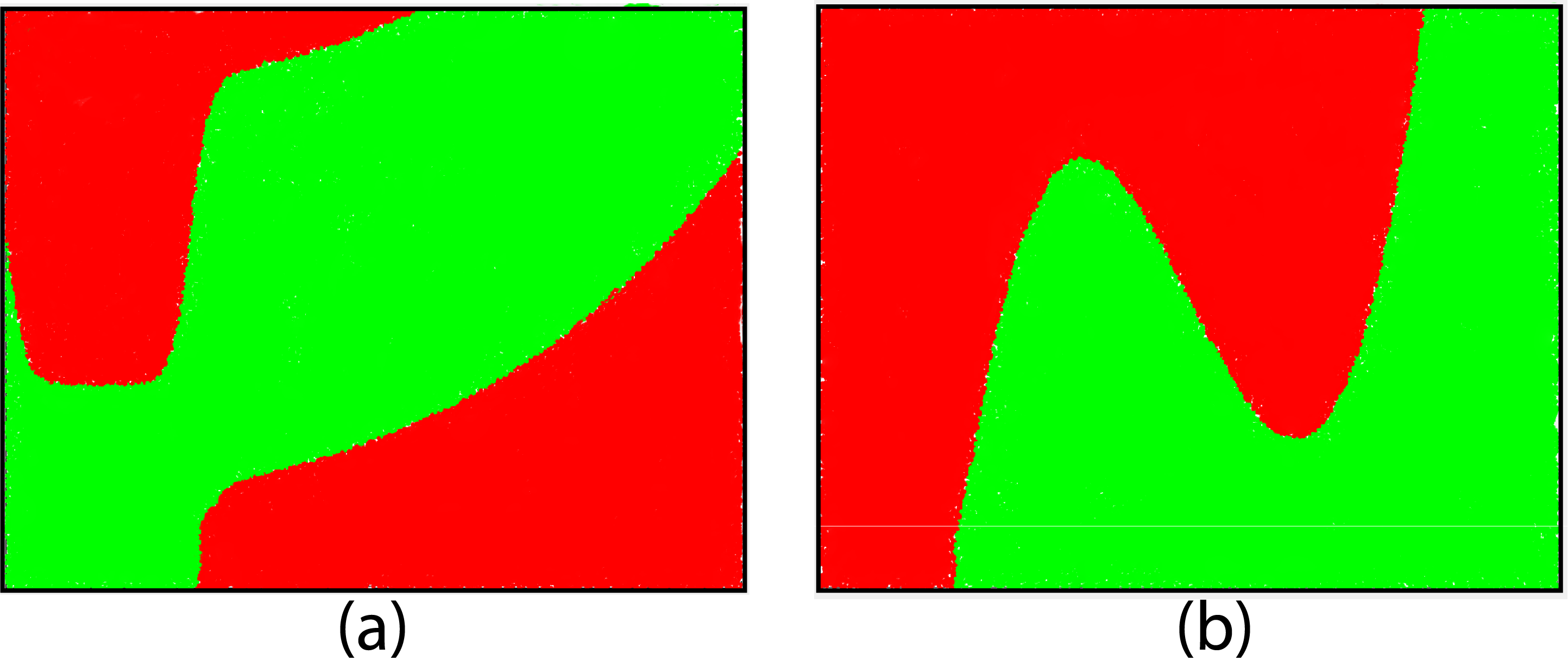}
	\caption{Classification of $\Lambda$ by the nonlinear decision boundaries corresponding to (a) $Q_1$ and (b) $Q_2$.}
	\label{fig:simulfun}
\end{figure}

$Q_1$ is constructed to create a nonlinear decision boundary for which the curvature varies significantly and to have a separated region for one of the classifications. Additionally, the magnitude of $\nabla Q_1$ is very large in the region of the nonlinear decision boundary, signifying that $Q$ changes rapidly in the region of the boundary.

\subsubsection*{Composed function 2}

The boundary is determined by the function,
\begin{equation}
	Q_2(x_1, x_2) = 1 + \tanh\left(10 \cdot \left(x_2 - 10x_1(x_1 - \frac{1}{2})(x_1 + \frac{1}{2})\right)\right),
\end{equation}
where $(x_1, x_2) \in \Lambda=[-1,1] \times [-1,1]$.  We plot the classification of $\Lambda$ in Fig.~\ref{fig:simulfun} (b). {The estimated probability of failure is $ 0.5001$ with standard deviation $0.00483$ computed with $5000$ points and $20$ repeats.}

$Q_2$ is constructed to create a nonlinear decision boundary for which the curvature and convexity varies significantly. Additionally, the magnitude of $\nabla Q_1$ is very large in the region of the nonlinear decision boundary, signifying that $Q$ changes rapidly in the region of the boundary.

\begin{figure}[htb]
	\centering
	\includegraphics[width =.8\textwidth]{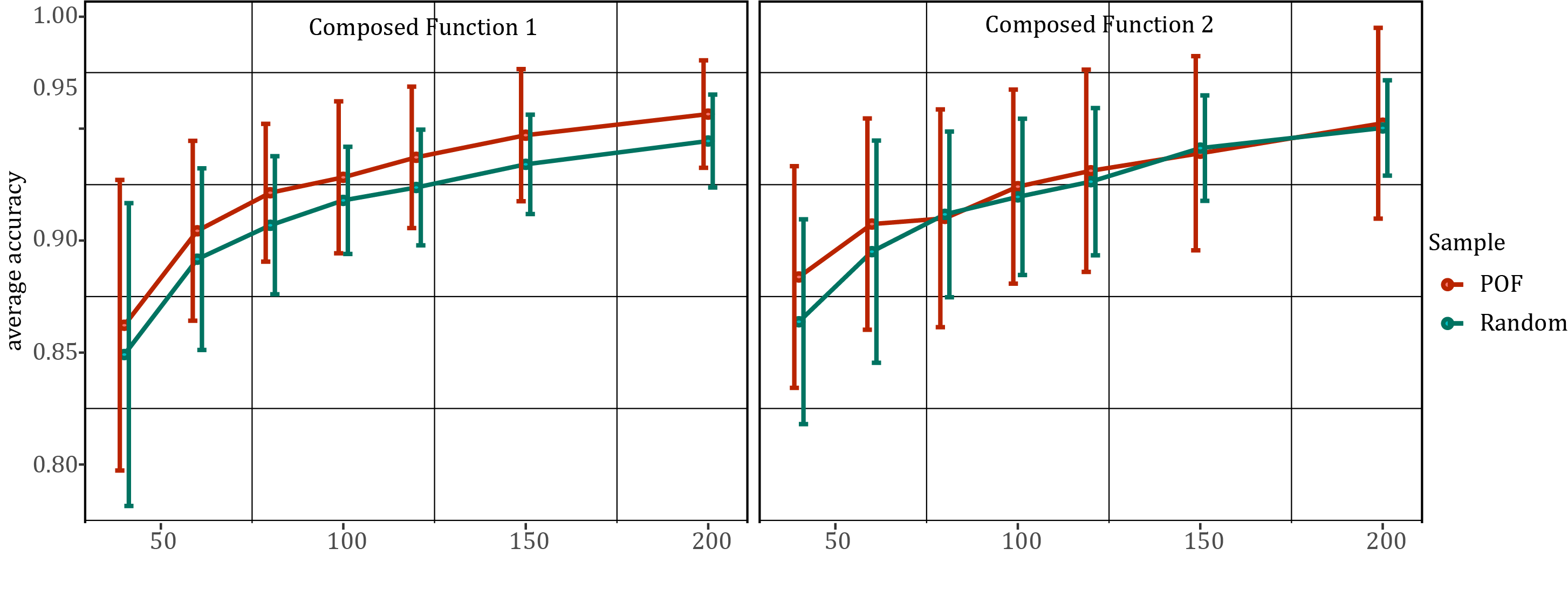}
	\caption{Performance for POF--Darts and random sampling averaged over all models for the composed functions 1 (left) and 2 (right).}
	\label{fig:enter-label1}
\end{figure}

\subsubsection*{The Brusselator model}

We consider the Brusselator model (\ref{Brussmodel}) with physical characteristics equal to the two rate parameters and the initial value for $z_1$. So, $x  = (x_1\; x_2\; x_3)^\top \in \Lambda = [0.7,1.5]\times [2.75,3.25]\times[1,2] \subset \mathbb{R}^3$ and 
$
Q(x) = \frac{1}{5}\int_{0}^{5}(z_1(t,x) + z_2(t,x))dt$. We present details of the numerical solution of the model in Appendix~\ref{app:nummethod}. The threshold value is $q_0=3.75$. {The estimated probability of failure is $0.1156$ with standard deviation $0.00246$ computed with $5000$ points and $20$ repeats.}

\subsection*{An elliptic differential equation}
We consider boundary defined by the solution of the elliptic two point boundary value problem,
\begin{equation}\label{ellmodel}
\begin{cases}
    -\frac{d}{ds}((s^2e^{-x_1 s} + 0.05)\frac{dz}{ds})+x_2 \frac{dz}{ds} = (1 - s)\tanh( 4(s-x_3) ) + \sin(5\pi x_4 s), & 0< s < 1, \\
    z(0) = z(1) = 0.
\end{cases}
\end{equation}
where $ \Lambda = [1,5]\times[0.1,0.3]\times[0,1]\times[0,2]$ and $Q$ is the average spatial value:
\[
Q(x) = \int_{0}^{1} u(s)\, ds.
\]
The threshold value is $q_0 = 0$. {The estimated probability of failure is $0.5428$ with standard deviation $ 0.00569$ computed with $5000$ points and $20$ repeats.}

\begin{figure}[htb]
	\centering
	\includegraphics[width =.8\textwidth]{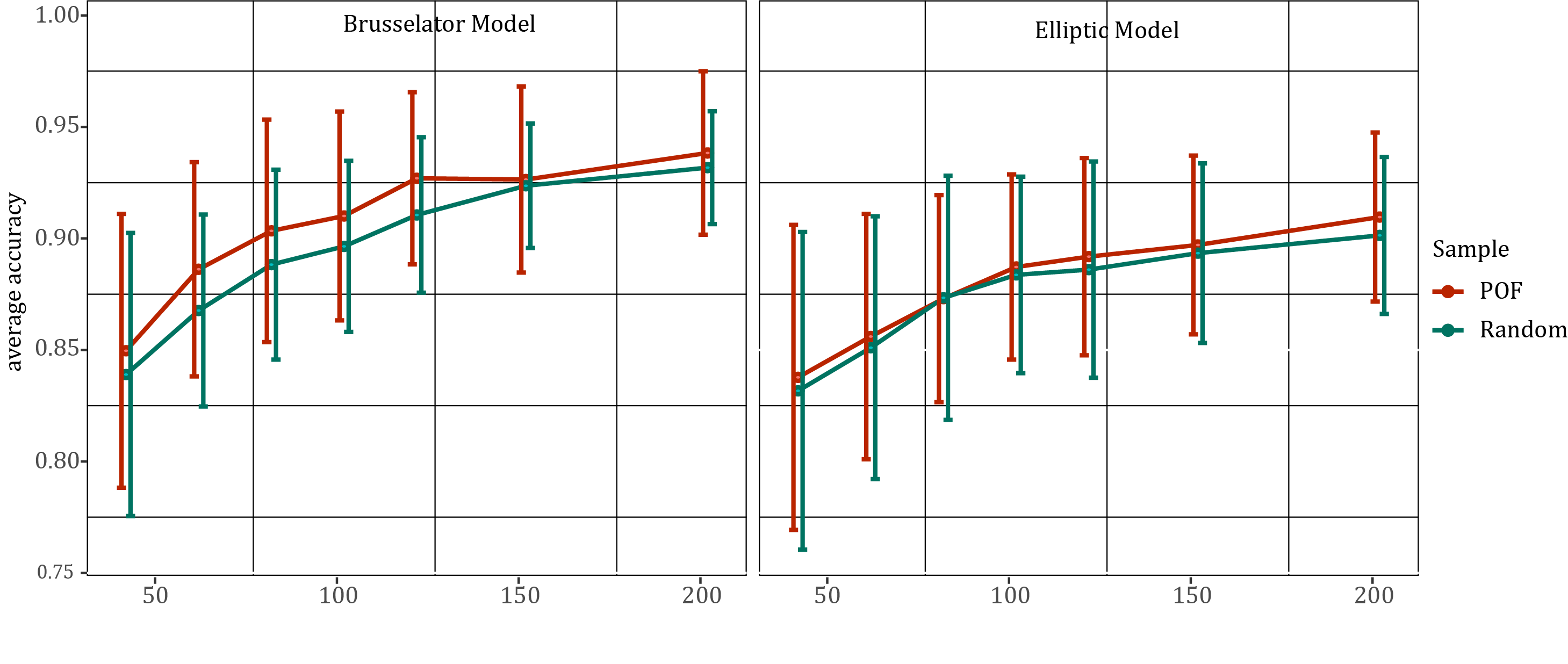}
	\caption{Performance for POF--Darts and random sampling averaged over all models for the Brusselator (left) and elliptic (right) problems.}
	\label{fig:enter-label2}
\end{figure}

\subsection{Comparison of POF--Darts and random sampling}

\begin{figure}[htb]
	\centering
	\includegraphics[width=.8\textwidth]{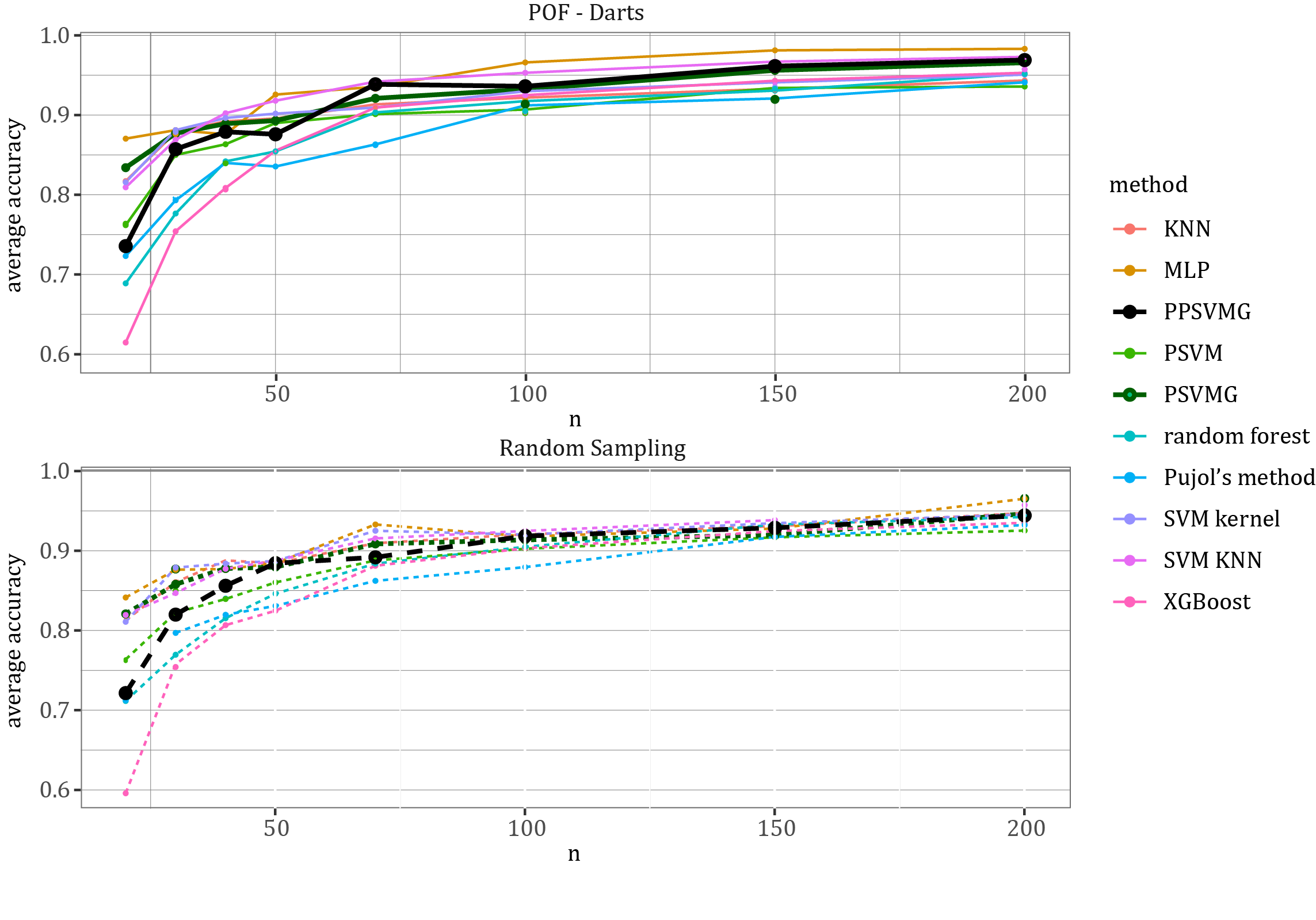}
	\caption{Averaged best model accuracy for composed function 1.}
	\label{fig:enter-label3}
\end{figure}

\begin{figure}[htb]
	\centering
	\includegraphics[width=.8\textwidth]{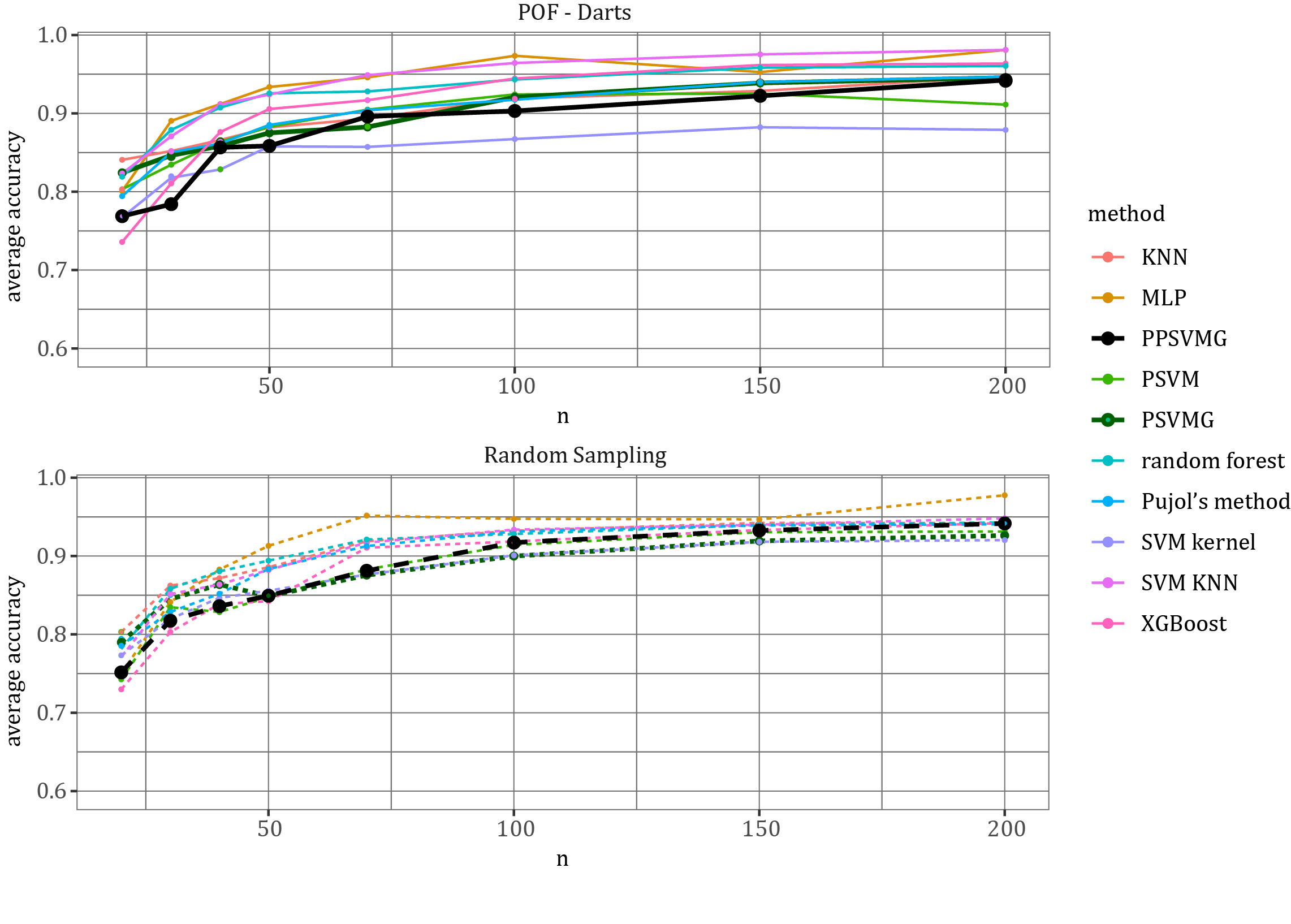}
	\caption{Averaged best model accuracy for composed function 2.}
	\label{fig:enter-label4}
\end{figure}

\begin{figure}[htb]
	\centering
	\includegraphics[width=.8\textwidth]{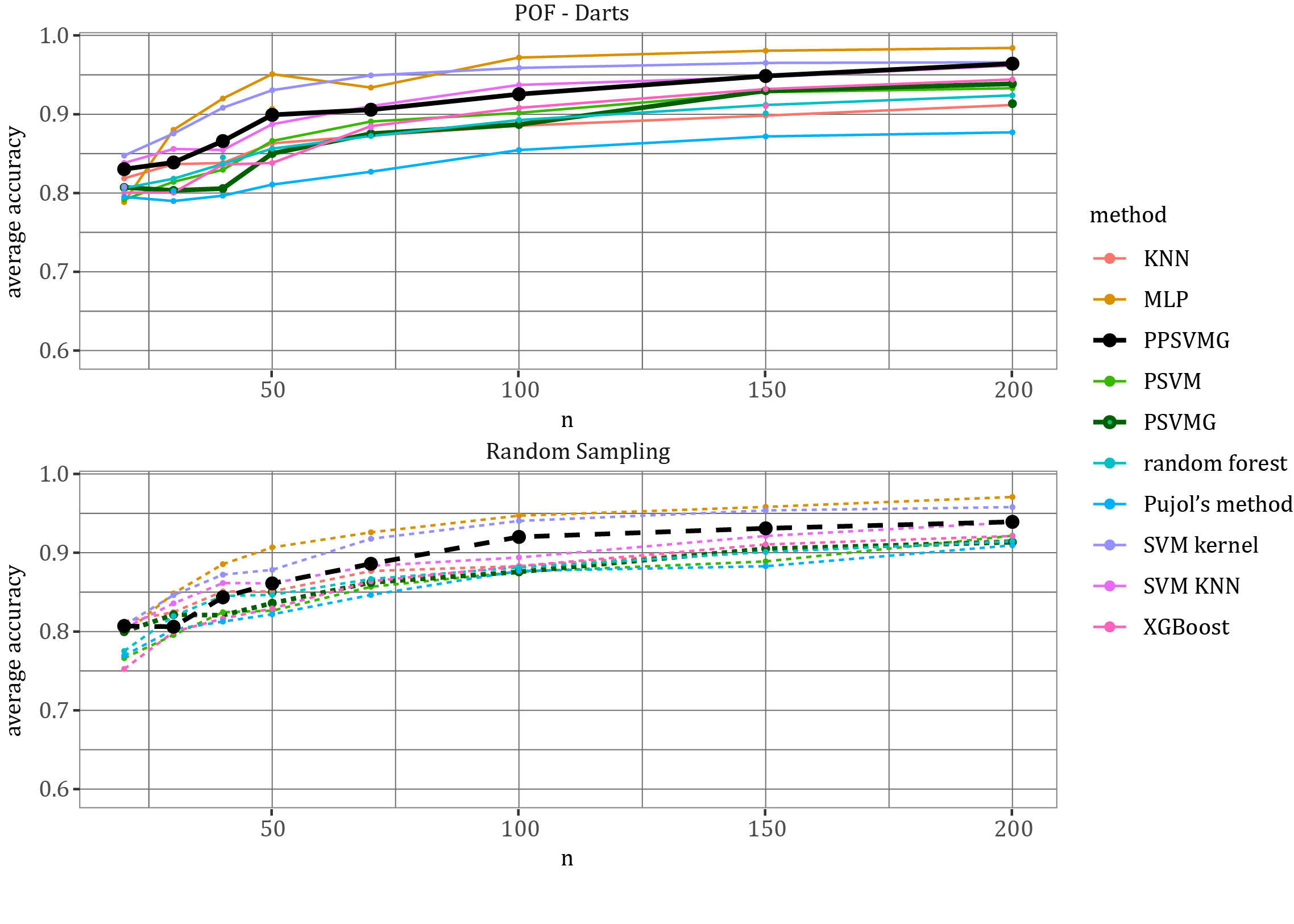}
	\caption{Averaged best model accuracy for the Brusselator model.}
	\label{fig:enter-label5}
\end{figure}

\begin{figure}[htb]
	\centering
	\includegraphics[width=.8\textwidth]{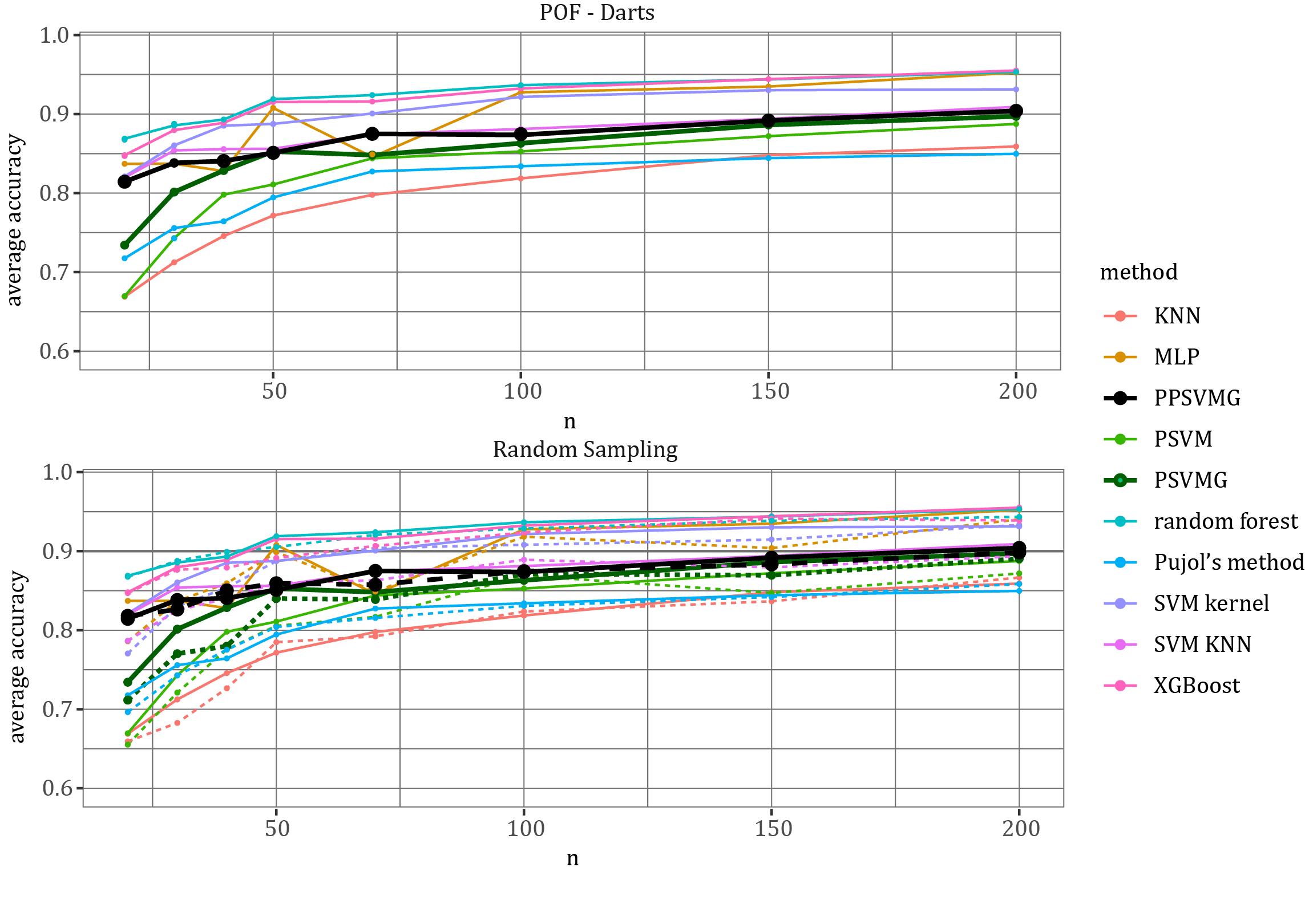}
	\caption{Averaged best model accuracy for elliptic differential equation.}
	\label{fig:enter-label6}
\end{figure}

We compare the average performance over all models using training sets constructed from POF--Darts with $1-d$ darts and random sampling using a uniform probability distribution in Figures \ref{fig:enter-label1} and \ref{fig:enter-label2}. {The overall average model performance is determined by computing the mean of  average benchmark model accuracies across the 20 repetitions. Additionally, we include the 95$\%$ interval to provide information about variance.} The models computed using POF--Darts generated samples almost always yields higher accuracy, and often with a significant improvement for small training sets. The improvement is most distinct for the composed function 1 and the Brusselator test problems.

\subsection{Comparison of models}

We compare the performance of the models listed in Sec.~\ref{compmethod} using training sets constructed from POF--Darts with $1-d$ darts and random sampling using a uniform probability distribution in Figures \ref{fig:enter-label3},  \ref{fig:enter-label4}, \ref{fig:enter-label5}, and \ref{fig:enter-label6}. {Solid lines represent the average model performance across 20 training sets generated by POF--Darts, while dashed lines indicate the average model performance across 20 training sets obtained through random uniform sampling.}

The PPSVMG  method demonstrates significantly better performance than PSVMG as well as PSVM, Pujol's method, and SVM KNN across these four examples. Additionally, PPSVMG ranks among the top 2–3 models in the Brusselator, Function 1, and Function 2 examples in terms of accuracy and performs above average in the elliptic example.

\section{{Application to the Lotka--Volterra model}}\label{sec:LVM}

{The Lotka–-Volterra model is  population dynamics model for multiple species competing in an environment of limited resources  that has found applications in a range of situations, e.g., animal populations, food webs, bacteria populations, and business economics (\cite{Rescigno1977,krik1979,Freed1984,Hsu2015,Gav2018,Butler_2018}). The model consists of a system of nonlinear ordinary differential equations. We focus on a 3-species model,
\begin{equation}
\label{eqn:lotka}
\begin{cases}
\displaystyle \frac{dz_i}{dt} = r_i\, z_i\, \bigg(1 - \sum_{j=1}^3 s_{ij}\,z_j\bigg), & t \in (0,10], \\
z_i(0) = z_{i,0},
\end{cases} \quad \text{ for } i = 1, 2, 3,
\end{equation}
with \textit{interaction} parameters $\{s_{ij}\}_{i,j=1}^3$, \textit{reproduction} parameters $\{r_i\}_{i=1}^3$, and initial conditions $\{z_{i,0}\}_{i=1}^3$. The matrix $s$ may be asymmetrical, e.g., when one species  strongly suppresses another but the opposite is not true. The Lotka--Volterra model is used to address a fundamental question for competing species: Do the species coexist with populations in a form of equilibrium or does one or more species die out?}

{ We fix  $z_{i,0} = (10,5,2)$, and the self--interacting parameters $s_{ii} = .5$ for $i = 1,2, 3$. Thus, the parameter space is 9 dimensional. We define the sample space $\Lambda$ with $s_{ij} \in [0.25,0.75]$ and $r_i \in [0.1,2]$ for $i, j = 1,2,3$, $i \ne j$. The quantity of interest is the population of species 3, $z_3(10)$, at time $T = 10$, and we compute the probability of  success $P(z_3(10) > 0.8)$, which is the probability that the population of species $3$ is remains higher than $.8$ through time $10$.}

{Figure~\ref{fig:LotVol-sample} shows $20$ trajectories of $z_3$ on $[0,10]$ corresponding to $20$ random samples drawn from the uniform distribution on $\Lambda$.  We compute numerical solutions of \ref{eqn:lotka} using Huen's method (\cite{Ascher1998}) with 1000 time steps. We see that this parameter domain leads to decaying population for species $3$ at various rates.  We also plot an empirical probability density function for the final values $z_3(10)$ for $5000$ random samples from a uniform distribution on $\Lambda$.}
\begin{figure}[htbp]
	\centering
	\includegraphics[width = .8\textwidth]{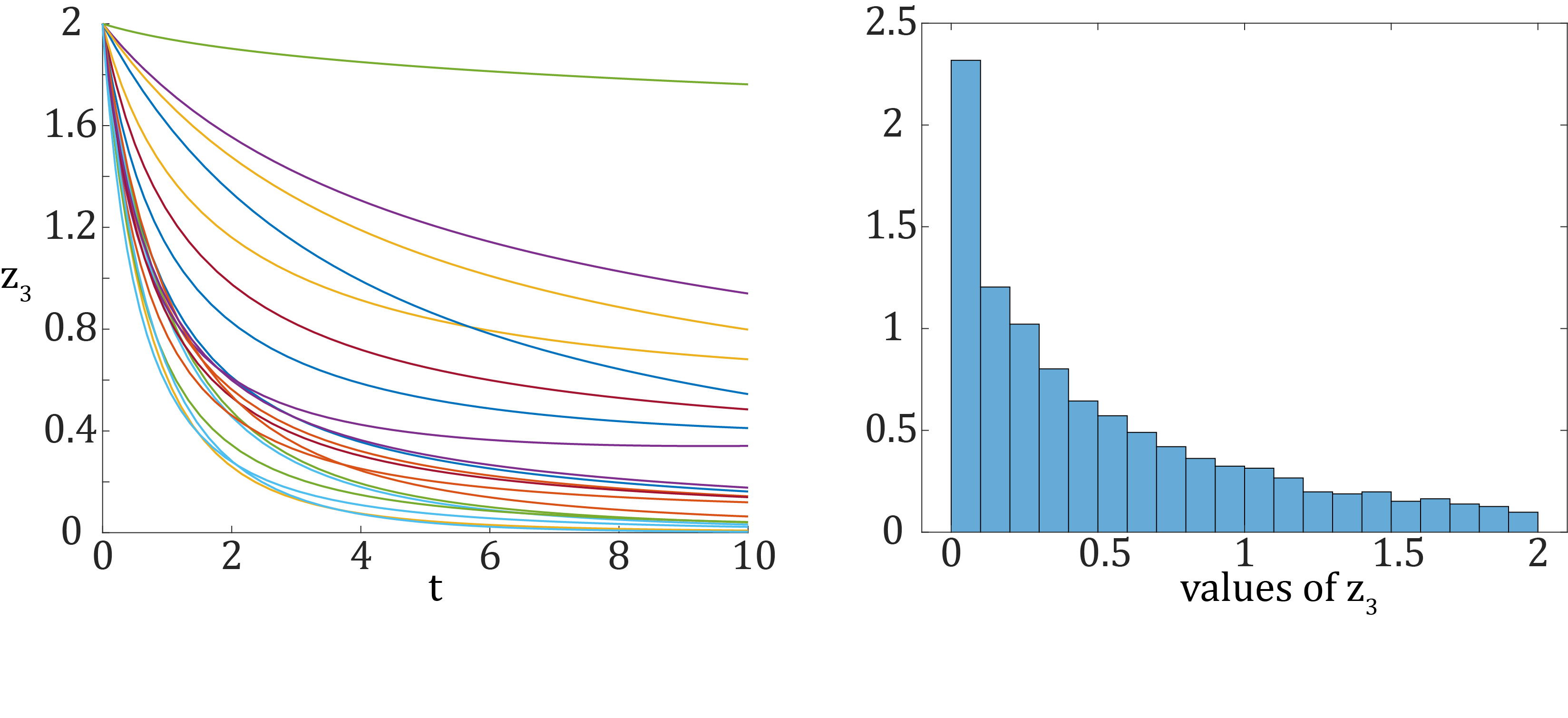}
	\caption{{Left: $20$ trajectories of $z_3$ on $[0,10]$ corresponding to $20$ random samples of the parameters. Right: Empirical probability density function for  $z_3(10)$ for $5000$ random samples of the parameters.}}
	\label{fig:LotVol-sample}
\end{figure}

{We compute a direct Monte Carlo method estimate for the probability of success by choosing a random sample using a uniform distribution in $\Lambda$ and calculating the proportion of points for which \( z_3(10) > 0.8 \). We compare the results to the estimate provided by PPSVMG  using sample sizes of,
\[
n \in \{20, 30, 40, 50, 70, 100, 150, 200, 300, 500, 750, 1000\}.
\]
For each sample size $n$, we perform 5-fold cross--validation to tune the PPSVMG model and then generate 5,000 surrogate predictions for Monte Carlo estimation.  Each experiment is repeated 30 times to estimate variance. For stability, we remove the two most extreme estimations for each repetition. Each direct Monte Carlo estimate  uses the same number of function evaluations and sample sizes as PPSVMG.  We show an estimate box plot in Figure~\ref{fig:lotka_estimation}.}
\begin{figure}[htbp]
    \centering
    \includegraphics[width=.8\linewidth]{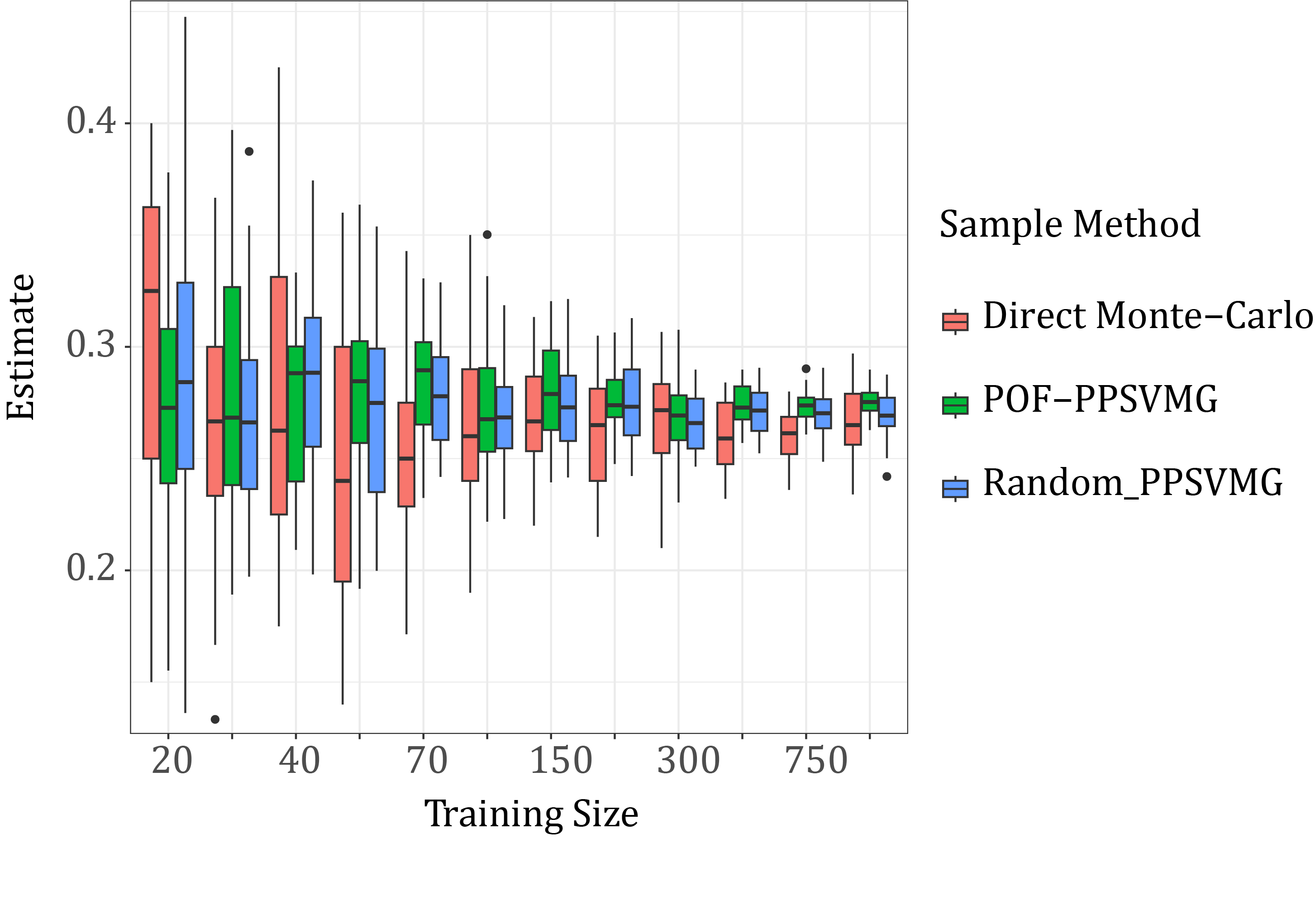}
    \caption{{Estimate box plot for $P(z_3(10)>0.8)$}}
    \label{fig:lotka_estimation}
\end{figure}

{We observe a slight overestimation in the PPSVMG results compared to the direct Monte Carlo approach.  Welch’s t--test for equality of means, see Figure~\ref{fig:lotka_estimation_pvalue}, provides evidence of a difference in estimated means between the two methods.  However, all PPSVMG estimated means are within the confidence interval of the direct Monte Carlo method. We also observe significant evidence that the PPSVMG estimate has significantly lower variance than the direct Monte Carlo estimate.}

\section{Conclusion}\label{sec:conclu}

\begin{figure}[htbp]
	\centering
	\includegraphics[width=\linewidth]{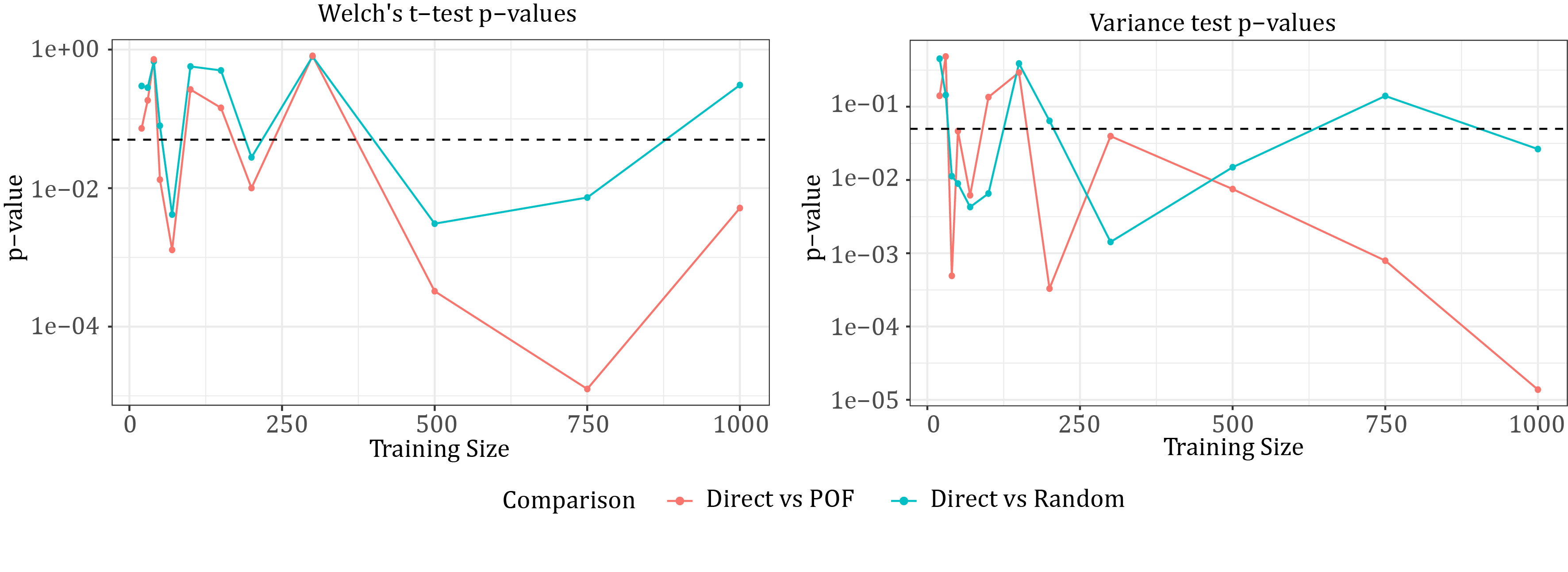}
	\caption{{Equal mean and one--sided variance hypothesis tests comparing direct Monte Carlo to  PPSVMG based on POF--Darts sampling. Top: Welch’s t--test for equality of means. Bottom: One--sided variance test with the alternative hypothesis that the variance of the direct method is largest.}}
	\label{fig:lotka_estimation_pvalue}
\end{figure}

{We introduce a novel machine learning methodology called The \emph{Penalized Profile Support Vector Machine based on the Gabriel edited set} for computation of the probability of failure of a complex system as determined by a nonlinear decision manifold defined by the inverse of the solution of a process model of the system.}
Formulating the computation as a classification problem, the method  is designed to minimize the number of evaluations of the model while preserving geometric integrity of the decision boundary to increase interpretability of the surrogate classification boundary while preserving robust accuracy.

The method employs POF--Darts sampling  to strategically allocate sample points near the decision boundary. It builds an approximate decision boundary consisting of linear SVMs based on clusters of Gabriel neighbors. The linear SVMs are computed using a penalty term designed to preserve the geometry of the true boundary. Classification of points are determined by an ensemble average with weights  computed using distances between the point being classified and the clusters. 
We prove two basic convergence results that together imply that the local linear SVM models become closer to the nonlinear decision boundary as the  sample size increases. In practice, the method achieves robust accuracy comparable to  other state of the art classification methods.

\section*{Acknowledgments}

Donald Estep acknowledges the support of the Natural Sciences and Engineering Research Council of Canada and the Canada Research Chairs program. Jacob Zhu acknowledges the support of the Natural Sciences and Engineering Research Council of Canada. 

\bibliographystyle{abbrv}
\bibliography{interactapasample}

\appendix

\section{Details  for ingredients}\label{sec:details}

\subsection{POF--Darts}\label{det:POFDarts}
POF--Darts determines the sample points $\{x_i \}_{i=1}^n\subset \Lambda$   in a sequential fashion. Given $\{x_i\}_{i=1}^{m-1}$, a new sample $x_m$ is associated with a sphere $S_m$ centered at $x_m$ with initial radius,
\begin{equation}\label{POFinirad}
	r(x_m) = \frac{|Q(x_m) - q_0|}{\mathcal{L}\, L_m}, 
\end{equation}
where $L_m$ is an estimate of the maximum magnitude of the gradient of $Q$ in a region containing the sample point and the nearby portion of the nonlinear decision boundary and $\mathcal{L}$ is a parameter that reduces the chances of underestimation of the distance. Intuitively, (\ref{POFinirad}) is suggested by using a first order Taylor expansion of $Q$ around $x_m$ to obtain an estimate of the distance between $x_m$ and a point on the nonlinear decision boundary. The maximum magnitude of the gradient of $Q$ is obtained by evaluating the gradient of $Q$ at neighboring samples. The simplest choice is $L_m=|\nabla Q(x_m)|$.

The radius of a sphere is  adjusted in the case of overlap with previously determined spheres. Overlap of spheres of points within the same classification region is permitted. But, overlap between spheres associated with points in different classification regions is not permitted. In this case, the  radii are adjusted as follows. If $x_i$ and $x_j$ have different classifications and their spheres overlap, we set
\[
r(x_i) = \frac{|Q(x_i) - q_0|}{L},\quad r(x_j) = \frac{|Q(x_j) - q_0|}{L},
\]
where $L = \frac{|Q(x_i) - Q(x_j)|}{\|x_i - x_j\|}$. 

The new sample $x_m$ is added in the complement of $\{S_i\}_{i = 1}^{m-1}$, i.e., 
\[
x_m \in \Lambda \bigg\backslash \bigcup_{i = 1}^{m-1} S_i.
\]
We use accept--reject random sampling to achieve this.  We present the algorithm in Algorithm~\ref{alg:POF} in Appendix~\ref{Algs}.

Unfortunately, the region not covered by previous spheres becomes an increasingly small portion of the total domain, making it increasingly challenging to determine new samples because of an unrealistic fail rate in accept--reject sampling.  To deal with this, we adapt the POF--Darts with $1-d$ darts sampling technique  originally introduced in (\cite{Ebeida_Kd_Darts}).  A $1-d$ dart is a $1$-dimensional hyperplane through a specified point, known as a $1-d$ flat, that is determined by letting $1$ coordinate vary while fixing the remaining $d-1$ coordinates of the point. We present the Algorithm~\ref{alg:1ddart} in Appendix~\ref{Algs}.

$1-d$ POF--Darts is costlier in terms of constructing individual samples than straightforward random sampling but exhibits a substantially lower rejection rate than accept--reject sampling when the region of exploration is small. The POF--Darts algorithm converges at an exponential rate once the density of sample points in the domain is sufficiently dense (\cite{Ebeida_Kd_Darts}).

\subsection{The Voronoi edited set}\label{det:GES}
One approach to determining points closest to a nonlinear decision boundary is based on the \textit{Voronoi diagram} for a collection of points, which is a partition of the region into polygons such that each point lies in a distinct polygon. Two points are \textit{Voronoi neighbors} if their polygons share a common boundary. The \textit{Voronoi edited set (VES)} is the subset of  points whose Voronoi neighbors have different classifications (\cite{BinaryK}).  The VES can be used to define an approximate decision boundary using the nearest neighbor rule that is consistent in the sense that it matches the decision boundary constructed using the full sample set. Unfortunately, the computational complexity of the algorithm determining the VES  grows exponentially with dimension.

The GES does not have the property of a consistent decision boundary of the Voronoi edited set, but it offers a feasible computational complexity. An efficient algorithm proposed in (\cite{BinaryK}) has complexity of $O(dn^2)$ on average while performance of classification via nearest neighbor using the GES closely matches those relying on the Voronoi edited set. 

\section{Other classification approaches}\label{App:othermethod}

In this Appendix, we briefly describe other classification approaches used for benchmark purposes.

\subsection{Pujol's method}\label{det:Pujol}

Pujol's method attempts to deal with the lack of decision boundary consistency by fine tuning the model classification constructed using the GES by checking the predictions against the full training set. The predicted classification of a point $x\in\Lambda$ is computed according to the sign of the  projection,
\[
\sigma_{ij}(x)  = \text{sign} \left((x - \overline{x}_{ij})\, \cdot\, \frac{(x_i - x_j)}{\|x_i - x_j\|} \right), 
\]
calculated for pair of Gabriel neighbors. This is equivalent to defining linear classification boundaries consisting of the hyperplanes that passes through each CBP  perpendicular to the Gabriel graph associated with the CBP, see Fig.~\ref{fig:baseclassifier}.
\begin{figure}[htb]
	\centering
	\includegraphics[width=0.7\textwidth]{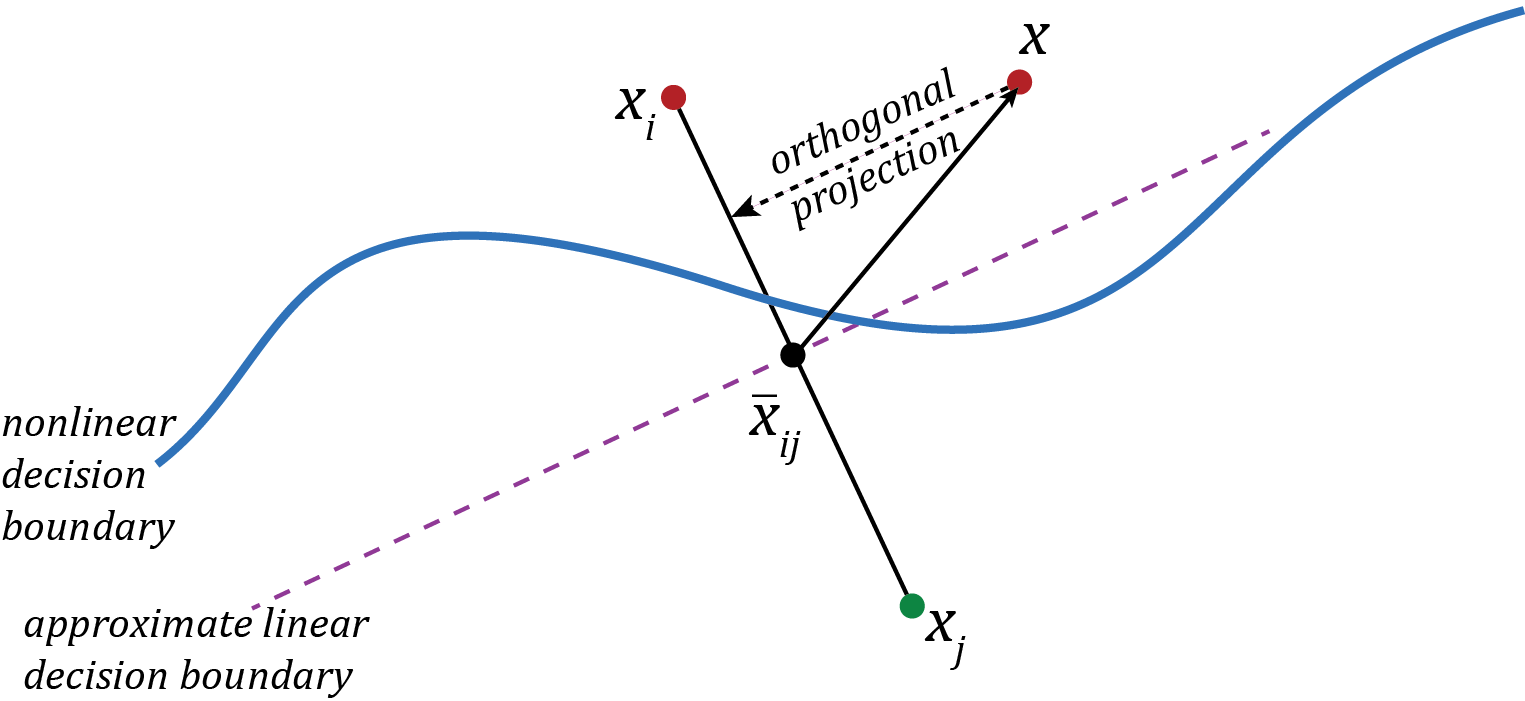}
	\caption{The implicit linear classification boundary defined in Pujol's method}
	\label{fig:baseclassifier}
\end{figure}

The ensemble classifier for the $\ell^{th}$ training point $x_\ell$, $1 \leq \ell \leq n$, is  defined,
\[
\pi(x_\ell) = \sum_{(i,j)\in\mathcal{N}} \sigma_{ij}(x_\ell)\omega_{ij},
\]
where $\pi(x_\ell) \in [-1, 1]$ and $\{\omega_{ij}\}_{(i,j)\in\mathcal{N}}$ are weights chosen to minimize the fitted sum of squares,
\[
\argmin_{\omega} \left(\sum_{\ell=1}^N|\pi(x_\ell) - y_\ell|^2 + \lambda^2\|\omega - \omega^{*}\|^2 \right),
\]
where $\omega^*$ is an initial guess, e.g., $\omega^{*}_{ij} = \frac{1}{|\mathcal{N}|}$ for $(i,j)\in\mathcal{N}$ and $|\mathcal{N}|$ is the number of points in $\mathcal{N}$, and $\lambda$ is a regularization parameter. An explicit formula for the solution can be determined by calculus as usual.

Pujol, et al argue that their method exhibits comparable performance to kernel SVM while surpassing KNN and linear SVM on most datasets. Pujol's method does not take into account knowledge of $Q$ or the geometry of the nonlinear decision boundary beyond what can be gleaned from the values on a given training set. Consequently, the computed classification boundary may be very misaligned with the true decision boundary and Pujol's method does not provide an intuitive geometric interpretation. We study performance in Sec.~\ref{sec:simul}.

\subsection{KNN-SVM}\label{App:KNN-SVM}
Zhang, et al (\cite{zhang2006svm}) introduce  the K Nearest Neighbor SVM (KNN-SVM) method, which constructs a localized linear SVM model for each sample point in the training set using a nearest neighbors approach to determine a subset of training points for the model. This approach frequently results in notable performance improvements compared to nonlinear SVMs while circumventing the challenging task of model selection. 

\subsection{PSVM  and MagKmeans clustering}\label{App:PSVM}
The cost of computing a localized linear SVM model at every training point is prohibitive. Cheng,  et al (\cite{Chen2010}) proposed the Profile SVM (PSVM) method that partitions the training dataset into  clusters and computes a  linear SVM model for cluster. Their approach uses the unsupervised clustering MagKmeans method as an attempt to  balance distribution of points with different classifications within each cluster. 

This begins by applying Lloyd's k-means clustering to the training set. Now, we let $Z$ denote an $n \times k$ matrix where $n$ is the size of the training set $\{\mathcal{X},\mathcal{Y}\}$ and $k$ is the number of clusters. The $i^{th}$ row of $Z$ indicates the membership of $x_i$, $1\leq i\leq n$. If $x_i$ is part of the $j^{th}$ cluster then $Z_{i,j}=1$ otherwise  $Z_{i,j}=0$. Lloyd's k-means clustering algorithm starts by randomly picking $k$ points in $\mathcal{X}$ as cluster centroids $\mathcal{C} = \{C_i\}_{i=1}^k$, then assigns the points to each cluster by minimizing the intra-cluster distance,
\begin{equation*}
	\argmin_{Z} \sum_{i=1}^{n}\sum_{j=1}^{k} Z_{i,j}\,\|x_i - C_j\|^2.
\end{equation*}
Following this, the centroids are recomputed and new clusters are computed. This iterated  until convergence to a steady state.

The MagKmeans clustering algorithm adds an additional penalty term to the objective function,
\begin{equation}
	\label{eq:MagKmeans}
	\argmin_{Z} \sum_{i=1}^{n}\sum_{j=1}^{k} Z_{i,j}\|x_i - C_j\|^2 + \gamma\sum_{j=1}^{k}\left|\sum_{i=1}^{n} Z_{i,j}y_i\right|,
\end{equation}
where $\gamma$ is a nonnegative penalization parameter. The penalty term quantifies the asymmetry in class distributions within each cluster and by minimizing this component, the algorithm strives to achieve a balanced representation of classifications within each cluster. 

Unfortunately, solving (\ref{eq:MagKmeans}) with the cluster membership matrix set to a binary classification is NP Hard. Cheng, et al (\cite{Chen2010}) relax the condition by allowing $Z$ to be real valued in $ [0,1]$ and defining the objective function to be,
\begin{align}
	\label{eq:MagKmeans2}
	\begin{split}
		&\argmin_{Z} \sum_{i=1}^{n}\sum_{j=1}^{k} Z_{i,j}\|x_i - C_j\|^2 +  \gamma \sum_{j=1}^{k}t_j,\\
		&\text{such that} \quad -t_j \le \bigg|\sum_{\ell=1}^{n} Z_{\ell,j}y_i \bigg| \le t_j,\; t_j \ge0,\; 0\le Z_{i,j} \le 1, \;\sum_{i=1}^{k} Z_{i,j} = 1,\, \forall i,j.
	\end{split}
\end{align}
We solve (\ref{eq:MagKmeans2}), using the ECOS conic solver and Gurobi optimizer in the cvxpy package in Python (\cite{diamond2016cvxpy,domahidi2013ecos,gurobi}). We present the algorithm in Appendix~\ref{Algs}.

The PSVM with MagKmeans clustering  has several flaws. First, the partition of the cluster does not consider the geometry of the decision boundary. As depicted in Fig.~\ref{fig:PSVM_comparison} (c), points in clusters 0 and 3 are not linearly separable, even though the intracluster distribution of different classes is balanced. Also, relaxing the optimization conditions to make it solvable in polynomial time results in suboptimal clustering and solutions. Moreover, the final number of clusters is not controllable. In Fig.~\ref{fig:PSVM_comparison}, PSVM ends up with considerably fewer clusters than the initial setting. Finally, effectiveness of PSVM is greatly influenced by the distribution of the training dataset.

\subsection{Operational details for the classification methods}\label{app:hyper}
The codes and parameter values used for classification methods used in the numerical experiments are specified at \url{https://github.com/jacobz0106/classificationGE.git}
We list the software in Table~\ref{tab:methods_defination}.

\begin{table}[htb]
	\centering
	\begin{tabular}{ |p{6cm}||p{7cm}|  }
		\hline
		\multicolumn{2}{|c|}{Benchmark classification software} \\
		\hline
		Method & Python packages\\
		\hline
		K nearest neighbors (KNN) & sklearn.neighbors.KNeighborsClassifier \\
		Random forest (rf)   & sklearn.ensemble.RandomForestClassifier  \\
		XGBoost &   xgboost.xgb  \\
		multilayer perception & sklearn.neural$\_$network.MLPClassifier  \\
		kernel SVM (SVM$\_$kernel)    & sklearn.svm.svc \\
		\hline
	\end{tabular}
	\caption{Classification methods and related Python packages used in the comparison}
	\label{tab:methods_defination}
\end{table}

\section{Details for computation shown in Figures \ref{fig:GPSVM_comparison} and \ref{fig:PSVM_comparison} }\label{App:PSVM_comp}

\begin{enumerate}
	\item Classification in $\Lambda$ by a linear boundary: \[Q = -1 \text{ if } x_1\le 5 \text{ else } Q=1\]
	\item Classification in $\Lambda$ by an oscillatory boundary: 
	\[
	Q = 1 \text{ if } (x_1 \le 2 \text{ or } x_1 \ge 8 \text{ or } x_2 \ge 8) | (x_1 \ge 4 \text{ and } x_1 \ge 6 \text{ and } x_2 \ge 2) \text{ else } Q=-1
	\]
	\item Partition of $\Lambda$ by a spiral boundary defined,
	\begin{gather*}
	x_1 = \pm \left(1 + \theta\right)^a \cos(\theta) + \left(b_1 - 0.5\right) \frac{c}{2},\\
	x_2 = \pm \left(1 + \theta\right)^a \sin(\theta) - 0.75 + \left(b_2 - 0.5\right) \frac{c}{2},
	\end{gather*}
	with $c\geq 1$, $0< a \leq 1$, $0 \leq b_1 \leq 1$, and $0\leq \theta \leq 2 \pi$.
	
\end{enumerate}

\section{Numerical solution of differential equation models}\label{app:nummethod}

\subsection*{Brusselator model}
We solve the Brusselator model using numerical method with fixed time step $h = T/N$ and $N$ is the number of discretization points. We use $N=100$ in the computations in this paper. The solution is smooth and there are no difficulties computing accurate numerical approximations. 

\subsection*{Elliptic model}
We solve the elliptic model using a standard finite difference approximation with $\frac{du}{dx} \approx \frac{u(x) - u(x - h) }{h}$ and $\frac{d^2 u }{d x^2} \approx \frac{ u(x + h) - 2u(x) + u(x - h) }{h^2}$ where the step size $h = 1/N$ and $N$ is the number of discretization points. We use $N=100$ in the computations in this paper. The resulting discrete system is solved using Gaussian elimination.

\section{Algorithms}\label{Algs}

\begin{algorithm}
	\caption{POF--Darts Algorithm}
	\label{alg:POF}
	\textbf{Input:} the number of initial points $n$, the size of the output sample set $m$, the parameter domain $\Lambda$, the critical value $q_0$,  the mapping function $Q$
	
	\textbf{Output:} a sample data set containing $m$ rows $\{x_i,Q_i,\nabla Q_i\}_{i=1}^m$
	\medskip
	
	\SetKwBlock{Beginn}{beginn}{ende}
	\Begin{
		\textbf{Initialization:} Randomly select $n$ points $\{x_i\}_{i=1}^n \in \Lambda$;
		
		Evaluate $Q(x_i)$, $i = 1, \dots, n $;
		
		Classify points by comparing $Q(x_i)$ with $q_0$ for $i = 1, \dots, n$;
		
		Evaluate the gradients $\nabla Q(x_i)$, $i = 1, \dots, n$;
		
		Calculate the radii of the point $x_i$, estimated by
		\begin{equation}
			r_i = \frac{|Q(x_i) - q_0|}{const * \max\{\text{threshhold} ,|Q'(x_i)|\} }, i = 1,\dots,n.  \label{eq:radius}
		\end{equation}

		\textit{When the gradient magnitude is small, the function mapping becomes inefficient. To prevent division by zero or extremely small values, we establish a minimum threshold for the gradient.}
		\medskip
		
		\For{$j = 1$ to $m - n$}{
			\If{the spheres centered at $x_p$ and $x_q$ overlap where $x_p$ and $x_q$ have different classifications}{
				remove the overlap by adjusting the radii:
				\[
				r_p = \frac{|Q(p) - q_0|}{|Q(x_p) - Q(x_q)|}*d(x_p, x_q), \quad r_q = \frac{|Q(q) - q_0|}{|Q(x_p) - Q(x_q)|}*d(x_p, x_q)
				\]
			}
			\EndIf{}
			Locate a point $x_{new}$ outside pre-existing prior spheres, using the $k-d$ darts method;
			
			Calculate its radius by equation~\ref{eq:radius};
			
			Evaluate $Q$ at $x_{new}$;
			
			Find the classification by comparing $Q(x_{new})$ with $q_0$;
			
			Evaluate the gradient $\nabla Q(x_{new})$. 
		}
		\Return the sample set containing the $m$ points.
	}
	
\end{algorithm}

\begin{algorithm}
	\caption{1-d Dart Algorithm~(\cite{Ebeida_Kd_Darts})}
	\label{alg:1ddart}
	
	\textbf{Input:} a collection of preexisting prior disks, sampling space $\Omega \in \mathbb{R}^d$ 
	
	\textbf{Output:} a d-dimensional point that is not overlapped with preexisting prior disks or a miss.
	\medskip
	
	\SetKwBlock{Beginn}{beginn}{ende}
	
	\While{number of misses havn't reached a specific number}{
		initiate a 1-d dart $t^1 $ by randomly sampling a $d$-dimensional point in $\Omega $\;
		\For{all $j =  1$ to $d$}{
			define a $1$-dimensional flat (line), denoted as $t^1_j$, by allowing the $j^{th}$ coordinate to vary\;
			construct a line segments $g = t^1_j \bigcap \Omega $\;
			\For{every preexisting priori disk $ c $}{
				removing the overlap parts of disk $c$ and $g$ by $ g = g \setminus D(c) $\;
			}
			\If{$g \neq \varnothing $}{
				mark $t^1$ as a hit\;
				draw a $1$-d sample $p$ in $j^{th}$ coordinate uniformly along $g $\;
				\Return the $d$-d sample by fixing $j^{th}$ coordinate of $t^1_j$ as $p$\;
			}
		}
		mark $t^1$ as a miss\;
	}
	
\end{algorithm}

\begin{algorithm}
	\caption{Improved Gabriel Edited algorithm}\label{alg:improvedGEA}
	
	\textbf{Input}: a set of points $\{\mathcal{X},\mathcal{Y}\} =\{x_i,y_i\}, x_i\in R^d$ belonging to class $y_i = \{-1,1\}$.
	
	\textbf{Output}: set of pair of indexes $\{(i,j)\}$ corresponding to the data pair that defines the CBP.
	
	\medskip
	
	\SetKwBlock{Beginn}{beginn}{ende}
	
	\Begin{
		
		Create an empty set $U$;
		
		\For{each $x_i|y_i = 1$ and $x_i \in \mathcal{X}$}{
			
			Create a set $N = \{ x_j|y_j = -1, x_j\in \mathcal{X}\}$;
			
			\For{\texttt{each} $x_j|y_j = -1$ \texttt{and} $x_j \in \mathcal{X}$}{
				compute  the midpoint $x_m$ of $x_i$ and $x_j$;
				\For{\texttt{each} $x_k \in \mathcal{X}$ \texttt{and} $x_k \ne x_i \ne x_j$} {
					\uIf{$d_{km} \le \frac{d_{ij}}{2}$ \tcc*{ $d_{km}$: Euclidean distance between $x_i$, $x_m$; }} { 
						remove $x_j$ from $N$;
						
						continue;
						
					}
					\uElseIf{$x_k \in N$}{
						compute  the midpoint $x_m^{'}$ of $x_i$ and $x_k$;
						
						\If{$d_{km^{'}} \le \frac{d_{ik}}{2}$} {
							remove $x_k$ from $N$;
						}
					}      
				}
			}
			\texttt{add $\{(i,j) | x_j \in N\}$ to $U$.}
		}
		
	} 
	
\end{algorithm}

\begin{algorithm}
	\caption{PPSVMG clustering algorithm based on CBPs}\label{alg:PPSVMG_clustering}
	
	\textbf{Input}: Cluster size $K$ and similarity $s$, training set $\{\mathcal{X}, \mathcal{Y}\}$ $=$ $\{x_i,y_i\}_{i=1}^n$, Gabriel neighbor pairs $\mathcal{G} =\{(x_i,x_j)\}_{(i,j)\in \mathcal{N}}$, and indices $\mathcal{N}$
	
	\textbf{Output}: A set of clusters containing the Gabriel neighbors $C =\{C_i\}_{i=1}^{M}$.
	
	\medskip
	
	\SetKwBlock{Beginn}{beginn}{ende}

	\Begin{
		\textbf{Initial clusters:}
		Find the CBPs by $\overline{\mathcal{G}}=\{\overline{x}_{ij}\}_{(i,j) \in {\mathcal{N}}}$, where $\overline{x}_{ij} =\frac{1}{2}\big(x_i + x_j \big)$ for $(i,j)\in\mathcal{N}$;
		Let C = \{\};
		
		\For{each $(i,j) \in \mathcal{N}$}{
			Find the $K$ nearest neighbors of $\overline{x}_{ij}$ in $\overline{\mathcal{G}}$, denote as $\{\overline{x}_{uv}\}_{(u,v)\in Neigh(i,j)}$\tcc*{Neigh(i,j): Index set for $K$ nearest neighbors of $x_{ij}$; }

			Add the cluster $\{x_u,x_v\}_{(u,v)\in Neig(i,j)}$ to C;
		}
		Compute the cluster similarities $\{S_{ij}\}_{i,j = 1}^{|\mathcal{G}|}$, where $S_{ij} = \frac{ |C_i \bigcap C_j| }{k}$.
		
		\textbf{Remove similar clusters:}
		
		\For{each $i$ in $I = \{j\}_{j = 1}^{|\mathcal{G}|}$ }{
			Set $C_j = \emptyset$ and remove $j$ from $I$ for every $S_{ij} \ge s, i\ne j;$  
		}
		\Return{C with nonempty clusters.}

	}
\end{algorithm}

\begin{algorithm}
	\caption{Penalized support vector machine}
	\label{alg:penal_SVM}
	\medskip
	\textbf{Input}: a set of points $\{\mathcal{X},\mathcal{Y}\} =\{x_i,y_i\}, x_i\in R^d$ belonging to class $y_i = \{-1,1\}$, values of the gradient $\{\nabla Q\}$ at the training points, a testing set $\{x^{'}_j\}_{j=1}^{n}$, slack variable $C$ and penalty constant $\kappa$. 
	
	\textbf{Output}: Predicted classifications for $\{x^{'}_j\}$.
	
	\medskip
	
	\SetKwBlock{Beginn}{beginn}{ende}
	
	\textbf{Training}: 
	
	\Begin{
		Find the norm vector $\mathbf{w^{*}}$ of the separating plane of soft--margin linear SVM by optimizing (\ref{softSVM});
		
		Solve $\alpha$ by optimizing the objective function defined by (\ref{optimization});
		
		Calculate $t^*$ by (\ref{eqn_t}) and $w(t) = t^{*}w^* + (1 - t^{*})w_{Q};$
		
		Calculate $b =  \frac{\sum_{\alpha_j > 0 } y_j - w(t)^Tx_j}{|\{\alpha_j|\alpha_j > 0\}|}$
	}
	\textbf{Predicting:}
	
	\Begin{
		return 1 if $w(t)^Tx_j^{'} + b >=0$ else -1.
	}
\end{algorithm}

\begin{algorithm}
	\caption{PSVMG algorithm}\label{alg:G-PSVM}
	
	\textbf{Input}:  a set of points $\{\mathcal{X},\mathcal{Y}\} =\{x_i,y_i\}, x_i\in R^d$ belonging to class $y_i = \{-1,1\}$, a testing set $\{x^{'}_j\}_{j=1}^{n}$ and number of initial clusters $k$, and $M\leq k$ for the number of clusters to use in the weighted prediction.
	
	\textbf{Output}: Predicted classifications for $\{x^{'}_j\}_{j=1}^{n}$.
	
	\medskip
	
	\SetKwBlock{Beginn}{beginn}{ende}
	
	\textbf{Training}:
	
	\Begin{
		
		Find the GES of $\{\mathcal{X},\mathcal{Y}\}$;
		
		Find the Characteristic Boundary Points (CBPs);
		
		Partition the CBPs into $k$ clusters by Lloyd's k-means algorithm;
		
		Construct $k$ new clusters by taking the corresponding Gabriel neighbors of the CBPs in each cluster, denote the cluster centroids as $\{C_i\}_{i=1}^k$;
		
		\For{each cluster}{
			Create a linear SVM on that cluster;
		}    
		
	}
	\textbf{Prediction}:
	
	\Begin{
		
		\For{each $x^{'}_j$}{
			Find the $M$ nearest clusters to $x^{'}_j$ by the distance to the centers of the clusters $\{C_i\}_{i=1}^k$, record the distances $\{d_i\}_{i=1}^M$;
			
			\For{each SVM related to the neighboring clusters}{
				predict $\hat{l}_i$ for $x^{'}_j$;
			}
			
			Final prediction for $x^{'}_j$ is based on the ensemble:\[
			\hat{l} = \sum_{i=1}^{M} \frac{\hat{l_i}}{d_i\sum_{j=1}^{M} 1/d_j}\]
			
		}

	}
\end{algorithm}

\begin{algorithm}
	\caption{PPSVMG algorithm}\label{alg:G-PSVM}
	
	\textbf{Input}:  a set of training points $\{\mathcal{X},\mathcal{Y}\} =\{x_i,y_i\}, x_i\in R^d$ belonging to class $y_i = \{-1,1\}$, a prediction set $\{x^{'}_j\}_{j=1}^{n}$ and initial cluster size $K$, and $M$ for the number of clusters to use in the weighted prediction.
	
	\textbf{Output}: Predicted classifications for $\{x^{'}_j\}_{j=1}^{n}$.
	
	\medskip
	
	\SetKwBlock{Beginn}{beginn}{ende}
	
	\textbf{Training}:
	
	\Begin{
		
		Find the GES of $\{\mathcal{X},\mathcal{Y}\}$ by Algorithm~\ref{alg:improvedGEA};
		
		Find the Characteristic Boundary Points (CBPs) by taking the midpoints of the pairs of Gabriel neighbors in the GES;
		
		Produce the set of clusters $C$ of GES;

		\For{each cluster in $C$}{
			Construct a penalized SVM;
		}    
		
	}
	\textbf{Prediction}:
	
	\Begin{
		
		\For{each $x^{'}_j$}{
			Find  $M$ nearest clusters to $x^{'}_j$ by the distance to the centers of the clusters $\{C_i\}_{i=1}^K$, record the distances $\{d_i\}_{i=1}^M$;
			
			\For{each SVM related to the neighboring clusters}{
				predict $\hat{l}_i$ for $x^{'}_j$;
			}
			
			Final prediction for $x^{'}_j$ is based on the ensemble:\[
			\hat{l} = \sum_{i=1}^{M} \frac{\hat{l_i}}{d_i\sum_{i=1}^{M} 1/d_i}\]
			
		}

	}
\end{algorithm}

\begin{algorithm}
	\caption{MagKmeans Clustering Algorithm}
	\label{alg:MagKmeans2}
	
	\textbf{Input}:  a set of points $\{\mathcal{X},\mathcal{Y}\} =\{x_i,y_i\}, x_i\in R^d$ belonging to class $y_i = \{-1,1\}$,  number of clusters $k$.
	
	\textbf{Output}: Cluster classifications for each $X_i$.
	
	\medskip
	
	\SetKwBlock{Beginn}{beginn}{ende}
	\SetKwRepeat{Do}{do}{while}%
	\Begin{ 
		Initialize $k$ centroids $C = \{C_i\}_{i=1}^k$ by randomly select $k$ points from $\{x_i\}_{i=1}^{n}$;

		\For{$i \in \{1$ to the maximum number of iterations$\}$}{
			Solve $Z$  in equation~\ref{eq:MagKmeans2} by linear programming;
			
			Update the new centroids $C^{'}$ by:\[
			C^{'}_j = \frac{\sum Z_{i,j}X_i}{\sum_{i = 1}^{n} Z_{i,j}
			},
			\]
			\If{$C^{'} = C$}{
				terminate and return $Z$.
			}\Else{
				$C = C^{'}$.
			} 
		}
		Reached maximum iteration and failed to obtain a stationary result. Repeat the above by choosing another random set of initial centroids. 
	}
\end{algorithm}

\end{document}